\newcommand{\lyxmathsym}[1]{\ifmmode\begingroup\def\b@ld{bold}
  \text{\ifx\math@version\b@ld\bfseries\fi#1}\endgroup\else#1\fi}
\newcommand{\shortVersion}[1]{#1} % use this to compile the paper with appendix
\providecommand{\tabularnewline}{\\}
\providecommand{\algorithmname}{Algorithm}
\newtheorem{theorem}{Theorem}
\newtheorem{lemma}{Lemma}
\newtheorem{definition}{Definition}
\newtheorem{assumption}{Assumption}
\begin{document}
\global\long\def\phimin{\phi_{\mbox{min}}}

\global\long\def\sigmamax{\sigma_{\mbox{max}}}

\global\long\def\reals{\mathbb{R}}

\global\long\def\norm#1{\left\Vert #1\right\Vert }

\global\long\def\alg{\mathrm{L}}

\global\long\def\ind{\mathbb{I}}

\title{Using multiple samples to learn mixture models}

\author{
Jason Lee\thanks{Work done while the author was an intern at Microsoft Resaerch}\\
  Stanford University\\
  Stanford, USA\\
  \texttt{jdl17@stanford.edu}\\
\And
Ran Gilad-Bachrach\\
  Microsoft Research\\
  Redmond, USA\\
  \texttt{rang@microsoft.com}\\
\And
Rich Caruana\\
  Microsoft Research\\
  Redmond, USA\\
  \texttt{rcaruana@microsoft.com}
}

\maketitle
\begin{abstract}
In the mixture models problem it is assumed that there are $K$
distributions $\theta_{1},\ldots,\theta_{K}$ and one gets to observe
a sample from a mixture of these distributions with unknown coefficients.
The goal is to associate instances with their generating
distributions, or to identify the parameters of the hidden distributions.
In this work we make the assumption that we have access to several
samples drawn from the same $K$ underlying distributions, but with 
different mixing weights. As with topic modeling, having multiple samples is often a reasonable 
assumption.  Instead of pooling the data into one sample, we prove that 
it is possible to use the differences between the samples to better recover 
the underlying structure. We present algorithms that 
recover the underlying structure 
under milder assumptions than the current state of art when either the 
dimensionality or the separation is high.  The methods, when applied to 
topic modeling, allow generalization to words not present in the training 
data.
\end{abstract}

\section{Introduction}

The mixture model has been studied extensively from several directions.
In one setting it is assumed that there is a single sample, that is a single collection of instances, from which
one has to recover the hidden information. A line of studies on clustering theory, starting
from \cite{Dasgupta99} has proposed to address this problem by finding
a projection to a low dimensional space and solving the problem in
this space. The goal of this projection is to reduce the dimension
while preserving the distances, as much as possible, between the means
of the underlying distributions. We will refer to this line as MM (Mixture Models).
On the other end of the spectrum, Topic modeling (TM), \cite{PTRV98, BleiNgJordan03}, assumes multiple samples (documents) 
that are mixtures, with different weights of the underlying distributions (topics) over words.

Comparing the two lines presented above shows some similarities and
some differences. Both models assume the same generative structure: a point (word)
is generated by first choosing the distribution $\theta_i$ using the mixing weights
and then selecting a point (word) according to this distribution. The 
goal of both models is to recover information about the generative model
 (see \cite{AroraKannan01} for more on that). However,
there are some key differences:
\begin{enumerate}[(a)]%for capital roman numbers.
\item 
In MM, there exists a single sample
to learn from. In TM, each document is a mixture of the topics,
but with different mixture weights. 
\item
In MM, the points are represented as feature vectors
while in TM the data is represented as a word-document co-occurrence
matrix. As a consequence, the model generated by TM cannot assign
words that did not previously appear in any document to topics. 
\item
TM assumes high density of the samples, i.e., that the each word appears multiple times. However, if the topics were not discrete distributions, as is mostly the case in MM, each "word" (i.e., value) would typically appear either zero or one time, which makes the co-occurrence matrix useless.
\end{enumerate}
In this work we try to close the gap between MM and TM. Similar to
TM, we assume that multiple samples are available. However,
we assume that points (words) are presented as feature vectors and
the hidden distributions may be continuous. This allows us to solve
problems that are typically hard in the MM model with greater ease
and generate models that generalize to points not
in the training data which is something that TM cannot do.

\subsection{Definitions and Notations}
We assume a mixture model in which there are $K$ mixture components
$\theta_{1},\ldots,\theta_{K}$ defined over the space $X$. These
mixture components are probability measures over $X$. We assume that
there are $M$ mixture models (samples), each drawn with different mixture weights
$\Phi^{1},\ldots,\Phi^{M}$ such that $\Phi^{j}=(\phi_{1}^{j},\ldots,\phi_{K}^{j})$
where all the weights are non-negative and sum to $1$. Therefore,
we have $M$ different probability measures $D_{1},\ldots,D_{M}$
defined over $X$ such that for a measurable set $A$ and $j=1,\ldots,M$
we have $D_{j}(A)=\sum_{i}\phi_{i}^{j}\theta_{i}\left(A\right)$.
We will denote by $\phimin$ the minimal value of $\phi_{i}^{j}$.

In the first part of this work, we will provide an algorithm that given samples $S_1,\ldots,S_M$ from the mixtures $D_1,\ldots,D_M$ finds a low-dimensional embedding that preserves the distances between the means of each mixture.

In the second part of this work we will assume that the mixture components
have disjoint supports. Hence we will assume that $X=\cup_{j}C_{j}$
such that the $C_{j}$'s are disjoint and for every $j,\,\,\,\theta_{j}(C_{j})=1$. Given samples $S_1,\ldots,S_M$, we will provide an algorithm that finds the supports of the underlying distributions, and thus clusters the samples.

\subsection{Examples}
Before we dive further in the discussion of our methods and how they
compare to prior art, we would like to point out that the model we
assume is realistic in many cases. Consider the
following example: assume that one would like to cluster medical records
to identify sub-types of diseases (e.g., different types of heart disease). In the classical clustering setting
(MM), one would take a sample of patients and try to divide them based
on some similarity criteria into groups. However, in many cases, one
has access to data from different hospitals in different geographical
locations. The communities being served by the different hospitals
may be different in socioeconomic status, demographics, genetic backgrounds, and exposure to climate and environmental hazards. Therefore, different disease sub-types are likely to appear in
different ratios in the different hospital. However, if patients in
two hospitals acquired the same sub-type of a disease, parts of their medical
records will be similar.

Another example is object classification in images. Given an image,
one may break it to patches, say of size 10x10 pixels. These patches
may have different distributions based on the object in that part of the image.
Therefore, patches from images taken at different locations will have
different representation of the underlying distributions. Moreover,
patches from the center of the frame are more likely to contain
parts of faces than patches from the perimeter of the picture. At
the same time, patches from the bottom of the picture are more likely
to be of grass than patches from the top of the picture.

In the first part of this work we discuss the problem of identifying
the mixture component from multiple samples when the means of the
different components differ and variances are bounded.
We focus on the problem of finding a low dimensional embedding of the
data that preserves the distances between the means since the problem
of finding the mixtures in a low dimensional space has already been
address (see, for example \cite{AroraKannan01}). 
Next, we address a different case in which we assume
that the support of the hidden distributions is disjoint. We show
that in this case we can identify the supports of each distribution.
Finally we demonstrate our approaches on toy problems. 
The proofs of the theorems and lemmas appear in the appendix.
Table~\ref{tab:scenarios}
summarizes the applicability of the algorithms presented here to the
different scenarios. 

\begin{wraptable}{o}{0.5\columnwidth}%
%
%\begin{table}
\begin{centering}
\begin{tabular}{|c|c|c|}
\hline 
 & \textbf{Disjoint} & \textbf{Overlapping}\tabularnewline
 & \textbf{clusters} & \textbf{clusters} \tabularnewline
\hline 
\hline 
\textbf{High} & \multirow{2}{*}{DSC, MSP} & \multirow{2}{*}{MSP}\tabularnewline
\textbf{dimension} & & \tabularnewline
\hline 
\textbf{Low} & \multirow{2}{*}{DSC} & \tabularnewline
\textbf{dimension} & &\tabularnewline
\hline 
\end{tabular}
\par\end{centering}

\caption{\label{tab:scenarios} Summary of the scenarios the MSP (Multi Sample Projection)
algorithm and the DSC (Double Sample Clustering) algorithm are designed to address.}
\end{wraptable}

\subsection{Comparison to prior art\label{sub:Comparison-to-prior}}

There are two common approaches in the theoretical study of the MM
model. The method of moments \cite{KalaiMoitraValiant10, MoitraValiant10, BelkinSinha10} 
allows the recovery of the model but requires exponential running
time and sample sizes. The other approach, to which we compare our results,
uses a two stage approach. In the first stage, the data is projected
to a low dimensional space and in the second stage the association
of points to clusters is recovered. Most of the results with this approach assume that
the mixture components are Gaussians. Dasgupta \cite{Dasgupta99}, in a seminal
paper, presented the first result in this line. He used random projections
to project the points to a space of a lower dimension. This work assumes
that separation is at least $\Omega(\sigmamax\sqrt{n})$. This result
has been improved in a series of papers. Arora and Kannan \cite{AroraKannan01} presented
algorithms for finding the mixture components which are, in most cases,
polynomial in $n$ and $K$. Vempala and Wang \cite{VempalaWang02} used PCA to reduce
the required separation to $\Omega\left(\sigmamax K^{\nicefrac{1}{4}}\log^{\nicefrac{1}{4}}\left(\nicefrac{n}{\phimin}\right)\right)$.
They use PCA to project on the first $K$ principal components, however,
they require the Gaussians to be spherical. Kanan, Salmasian and Vempala \cite{KannanSalmasianVempala05} 
used similar spectral methods but were able to improve the results
to require separation of only $c\sigmamax\nicefrac{K^{\nicefrac{2}{3}}}{\phimin^{2}}$.
Chaudhuri \cite{ChaudhuriRao08} have suggested using correlations and independence
between features under the assumption that the means of the Gaussians differ
on many features. They require separation of $\Omega\left(\sigmamax\sqrt{K\log(K\sigmamax\log n/\phimin)}\right)$,
however they assume that the Gaussians are axis aligned and that the
distance between the centers of the Gaussians is spread across $\Omega\left(K\sigmamax\log n/\phimin\right)$
coordinates.

We present a method to project the problem into a space of dimension
$d^{*}$ which is the dimension of the affine space spanned by the
means of the distributions. We can find this projection and maintain
the distances between the means to within a factor of $1-\epsilon$.
The different factors, $\sigmamax,\, n$ and $\epsilon$ will affect
the sample size needed, but do not make the problem impossible. This
can be used as a preprocessing step for any of the results discussed above. For example, combining
with \cite{Dasgupta99} yields an algorithm that requires a separation
of only $\Omega\left(\sigmamax\sqrt{d^{*}}\right)\leq\Omega\left(\sigmamax\sqrt{K}\right)$.
However, using \cite{VempalaWang02} will result in separation requirement
of $\Omega\left(\sigmamax\sqrt{K\log\left(K\sigmamax\log d^{*}/\phimin\right)}\right)$.
There is also an improvement in terms of the value of $\sigmamax$
since we need only to control the variance in the affine space spanned
by the means of the Gaussians and do not need to restrict the variance
in orthogonal directions, as long as it is finite. Later we also show
that we can work in a more generic setting where the distributions
are not restricted to be Gaussians as long as the supports of the
distributions are disjoint. 
While the disjoint assumption may seem too strict, we note that the results presented 
above make very similar assumptions.  For example, even
if the required separation is $\sigmamax K^{\nicefrac{1}{2}}$ then
if we look at the Voronoi tessellation around the centers of the Gaussians,
each cell will contain at least $1-\left(2\pi\right)^{-1}K^{\nicefrac{3}{4}}\exp\left(-\nicefrac{K}{2}\right)$ of the mass 
of the Gaussian.
Therefore, when $K$ is large, the supports of the Gaussians are almost
disjoint.

\section{Projection for overlapping components}

In this section we present a method to use multiple samples to
project high dimensional mixtures to a low dimensional space while
keeping the means of the mixture components well separated. The main
idea behind the Multi Sample Projection (MSP) algorithm is simple.
Let $\mu_{i}$ be the mean of the $i$'th component $\theta_{i}$
and let $E_{j}$ be the mean of the $j$'th mixture $D_{j}$. From
the nature of the mixture, $E_{j}$ is in the convex-hull of $\mu_{1},\ldots,\mu_{K}$
and hence in the affine space spanned by them; this is demonstrated
in Figure~\ref{fig:MSP demo}. Under mild assumptions, if we have
sufficiently many mixtures, their means will span the affine space
spanned by $\mu_{1},\ldots,\mu_{K}$. Therefore, the MSP algorithm
estimates the $E_{j}$'s and projects to the affine space they span.
The reason for selecting this sub-space is that by projecting on this
space we maintain the distance between the means while reducing
the dimension to at most $K-1$. The MSP algorithm is presented in
Algorithm~\ref{alg:Multi-Sample-Projection-(MSP)}. In the following
theorem we prove the main properties of the MSP algorithm. We will assume that $X=\reals^{n}$, the first two moments of $\theta_{j}$
are finite, and $\sigmamax^{2}$ denotes maximal variance of any of
the components in any direction. The separation of the mixture components
is $\min_{j\neq j^{\prime}}\Vert\mu_{j}-\mu_{j^{\prime}}\Vert$. Finally,
we will denote by $d^{*}$ the dimension of the affine space spanned
by the $\mu_{j}$\textquoteright{}s. Hence, $d^{*}\leq K-1$.

\begin{algorithm}

\caption{\textbf{Multi Sample Projection (MSP) \label{alg:Multi-Sample-Projection-(MSP)}}}

\textbf{Inputs:}

Samples $S_{1},\ldots,S_{m}$ from mixtures $D_{1},\ldots,D_{m}$

\textbf{Outputs:}

Vectors $\bar{v}_{1},\ldots,\bar{v}_{m-1}$ which span the projected
space

\textbf{Algorithm:}
\begin{enumerate}
\item For $j=1,\ldots,m$ let $\bar{E}_{j}$ be the mean of the sample $S_{j}$
\item For $j=1,\ldots,m-1$ let $\bar{v}_{j}=\bar{E}_{j}-\bar{E}_{j+1}$
\item return $\bar{v}_{1},\ldots,\bar{v}_{m-1}$\end{enumerate}

\end{algorithm}

\begin{theorem} \textbf{MSP Analysis\label{thm: MSP-Analysis}}

Let $E_{j}=E\left[D_{j}\right]$ and let $v_{j}=E_{j}-E_{j+1}$. Let
$N_{j}=\left|S_{j}\right|$. The following holds for MSP:
\begin{enumerate}
\item The computational complexity of the MSP algorithm is $n\sum_{j=1}^{M}N_{j}+2n\left(m-1\right)$
where $n$ is the original dimension of the problem.
\item  For any $\epsilon>0$, $\Pr\left[\sup_{j}\norm{E_{j}-\bar{E}_{j}}>\epsilon\right]\leq\frac{n\sigmamax^{2}}{\epsilon^{2}}\sum_{j}\frac{1}{N_{j}}\,\,\,.$
\item Let $\bar{\mu}_{i}$ be the projection of $\mu_{i}$ on the space
spanned by $\bar{v}_{1},\ldots,\bar{v}_{M-1}$ and assume that $\forall i,\,\,\,\mu_{i}\in\mbox{span}\left\{ v_{j}\right\} $.
Let $\alpha_{j}^{i}$ be such that $\mu_{i}=\sum_{j}\alpha_{j}^{i}v_{j}$
and let $A=\max_{i}\sum\left|\alpha_{j}^{i}\right|$ then with probability
of at least $1-\frac{n\sigmamax^{2}}{\epsilon^{2}}\sum_{j}\frac{1}{N_{j}}$
\[
\Pr\left[\max_{i,i^{\prime}}\left|\norm{\mu_{i}-\mu_{i^{\prime}}}-\norm{\bar{\mu}_{i}-\bar{\mu}_{i^{\prime}}}\right|>\epsilon\right]\leq\frac{4n\sigmamax^{2}A^{2}}{\epsilon^{2}}\sum_{j}\frac{1}{N_{j}}\,\,\,.
\]

\end{enumerate}
\end{theorem}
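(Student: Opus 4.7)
The plan is to dispatch the three parts in order; Parts 1 and 2 are short (a complexity count and a textbook Chebyshev argument), and Part 3 is the technical core but reduces to Part 2 via a deterministic linear-algebraic bound. For Part 1, computing each $\bar{E}_{j}$ requires $nN_{j}$ arithmetic operations (accumulate the sum coordinate-wise, then divide), for a total of $n\sum_{j=1}^{m}N_{j}$; each of the $m-1$ successive differences $\bar{v}_{j}=\bar{E}_{j}-\bar{E}_{j+1}$ then costs $2n$ operations, giving the extra $2n(m-1)$.

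For Part 2, I would apply Chebyshev coordinate-wise. Because each $\theta_{i}$ has variance at most $\sigmamax^{2}$ in every direction, the same is true of each mixture $D_{j}$, so each of the $n$ coordinates of $\bar{E}_{j}$ has variance at most $\sigmamax^{2}/N_{j}$, and summing over coordinates gives $\mathbb{E}\norm{\bar{E}_{j}-E_{j}}^{2}\leq n\sigmamax^{2}/N_{j}$. Markov applied to $\norm{\bar{E}_{j}-E_{j}}^{2}$ yields $\Pr[\norm{\bar{E}_{j}-E_{j}}>\epsilon]\leq n\sigmamax^{2}/(N_{j}\epsilon^{2})$, and a union bound over the $m$ samples produces the supremum tail bound.

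For Part 3, I would condition on the good event $\{\sup_{j}\norm{\bar{E}_{j}-E_{j}}\leq\delta\}$ for a threshold $\delta$ to be chosen at the end; on this event the triangle inequality gives $\norm{\bar{v}_{j}-v_{j}}\leq 2\delta$ for every $j$. Since $\mu_{i}=\sum_{j}\alpha_{j}^{i}v_{j}$, define the surrogate
\[
\tilde{\mu}_{i}\;:=\;\sum_{j}\alpha_{j}^{i}\bar{v}_{j}\;\in\;\mathrm{span}\{\bar{v}_{1},\ldots,\bar{v}_{m-1}\},
\]
which satisfies $\norm{\mu_{i}-\tilde{\mu}_{i}}\leq\sum_{j}|\alpha_{j}^{i}|\cdot\norm{v_{j}-\bar{v}_{j}}\leq 2A\delta$. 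Because $\bar{\mu}_{i}$ is the orthogonal projection of $\mu_{i}$ onto that span, $\norm{\mu_{i}-\bar{\mu}_{i}}\leq\norm{\mu_{i}-\tilde{\mu}_{i}}\leq 2A\delta$, and the reverse triangle inequality then gives
\[
\bigl|\norm{\mu_{i}-\mu_{i'}}-\norm{\bar{\mu}_{i}-\bar{\mu}_{i'}}\bigr|\;\leq\;\norm{\mu_{i}-\bar{\mu}_{i}}+\norm{\mu_{i'}-\bar{\mu}_{i'}}\;\leq\;4A\delta
\]
uniformly in $(i,i')$. Choosing $\delta$ of order $\epsilon/A$ and invoking Part 2 at that threshold yields a bound of the shape $A^{2}n\sigmamax^{2}/\epsilon^{2}\cdot\sum_{j}1/N_{j}$, matching the form of the statement.

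The main obstacle is purely constant-matching: the two triangle inequalities in the Part 3 argument lose a factor of $4$, so reproducing the exact $4A^{2}$ in the statement requires either a slightly tighter perturbation step or a careful calibration of $\delta$. Beyond constants, the only substantive point to track is that the supremum-over-$j$ form of Part 2 is precisely what lets a single good event simultaneously control all pairs $(i,i')$, with no further union bound over the $O(K^{2})$ pairs.
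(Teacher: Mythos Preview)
Your proposal is correct and follows essentially the same route as the paper: an operation count for Part~1, Chebyshev/Markov on $\norm{\bar{E}_j-E_j}^2$ plus a union bound over $j$ for Part~2, and for Part~3 the surrogate $\tilde{\mu}_i=\sum_j\alpha_j^i\bar{v}_j$ combined with the projection-minimizes-distance fact and the reverse triangle inequality. The only difference is that you track constants more carefully than the paper does: the paper silently passes from $\sup_j\norm{E_j-\bar{E}_j}\leq\epsilon$ to $\norm{\mu_i-\bar{\mu}_i}\leq A\epsilon$, omitting the factor of $2$ from $\norm{v_j-\bar{v}_j}\leq 2\epsilon$ that you (correctly) retain, which is exactly the constant discrepancy you flag at the end.
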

\begin{wrapfigure}{o}{0.5\columnwidth}%
\begin{centering}
\includegraphics[width=0.4\textwidth]{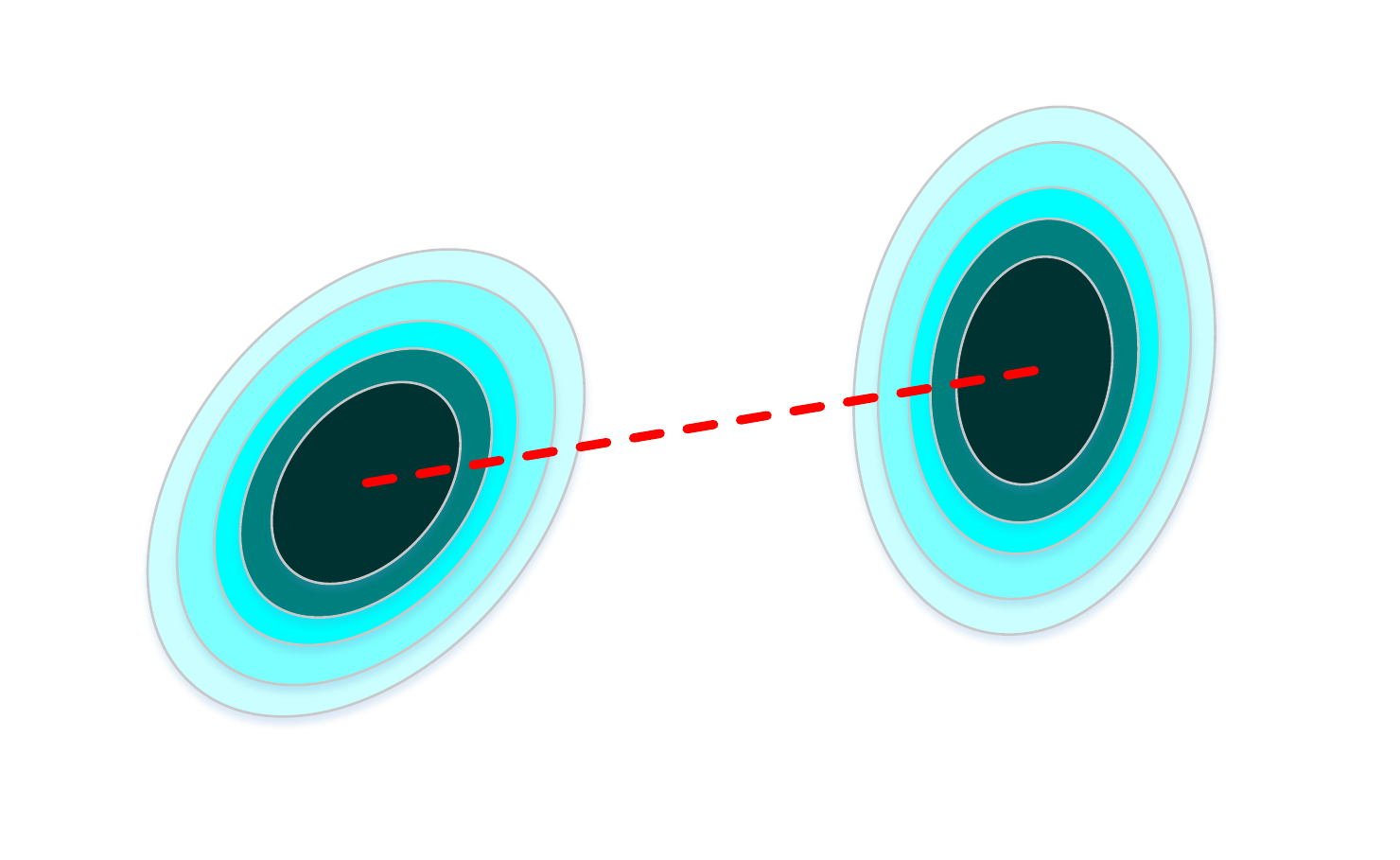}
\par\end{centering}

\caption{\label{fig:MSP demo}The mean of the mixture components will be in
the convex hull of their means demonstrated here by the red line.}
\end{wrapfigure}%

The MSP analysis theorem shows that with large enough samples, the
projection will maintain the separation between the centers of the
distributions. Moreover, since this is a projection, the variance
in any direction cannot increase. The value of $A$ measures the
complexity of the setting. If the mixing coefficients are very different
in the different samples then $A$ will be small. However, if the mixing
coefficients are very similar, a larger sample is required. Nevertheless,
the size of the sample needed is polynomial in the parameters of the
problem. It is also apparent that with large enough samples, a good
projection will be found, even with large variances, high dimensions
and close centroids. 

A nice property of the bounds presented here is that they assume only
bounded first and second moments. Once a projection to a low dimensional
space has been found, it is possible to find the clusters using approaches 
presented in section~\ref{sub:Comparison-to-prior}. However, the analysis of the
MSP algorithm assumes that the means of $E_{1},\ldots,E_{M}$ span
the affine space spanned by $\mu_{1},\ldots,\mu_{K}$. Clearly, this
implies that we require that $m>d^{*}$. However, when $m$ is
much larger than $d^{*}$, we might end-up with a projection on
too large a space. This could easily be fixed since in this case, $\bar{E}_{1},\ldots,\bar{E}_{m}$
will be almost co-planar in the sense that there will be an affine
space of dimension $d^{*}$ that is very close to all these points
and we can project onto this space.
%% On the other hand, it might be
%%that even if $M>d^{*}$ then the mixture coefficients are singular
%%in the sense that $E_{1},\ldots,E_{M}$ do not span the entire space
%%spanned by $\mu_{1},\ldots,\mu_{K}$. Nevertheless, it is easy to
%%see that if the mixture coefficient are randomly selected by a process
%%which is continuous w.r.t. the Lebesgue measure then the probability
%%for that is zero. (?????are any/all of the notes above important?????)

\section{Disjoint supports and the Double Sample Clustering (DSC) algorithm}

In this section we discuss the case where the underlying
distributions have disjoint supports. In this case, we do not make
any assumption about the distributions. For example, we do not
require finite moments. However, as in the mixture of Gaussians case
some sort of separation between the distributions is needed, this
is the role of the disjoint supports. 

We will show that given two samples from mixtures with
different mixture coefficients, it is possible to find the supports
of the underlying distributions (clusters) by building a tree of classifiers
such that each leaf represents a cluster. The tree is constructed
in a greedy fashion. First we take the two samples, from the two distributions,
and reweigh the examples such that the two samples will have the
same cumulative weight. Next, we train a classifier to separate between
the two samples. This classifier becomes the root of the tree. It
also splits each of the samples into two sets. We take all the examples
that the classifier assign to the label $+1(-1)$, reweigh them and
train another classifier to separate between the two samples. We keep
going in the same fashion until we can no longer find a classifier
that splits the data significantly better than random. 

To understand why this algorithm works it is easier to look first
at the case where the mixture distributions are known. If $D_{1}$
and $D_{2}$ are known, we can define the $L_{1}$ distance between
them as $L_{1}\left(D_{1},D_{2}\right)=\sup_{A}\left|D_{1}\left(A\right)\lyxmathsym{\textendash}D_{2}\left(A\right)\right|$.%
\footnote{the supremum is over all the measurable sets.%
} It turns out that the supremum is attained by a set $A$ such that
for any $i$, $\mu_{i}\left(A\right)$ is either zero or one. Therefore,
any inner node in the tree splits the region without breaking clusters.
This process proceeds until all the points associated with a leaf
are from the same cluster in which case, no classifier can distinguish
between the classes.

When working with samples, we have to tolerate some error and prevent
overfitting. One way to see that is to look at the problem of approximating
the $L_{1}$ distance between $D_{1}$ and $D_{2}$ using samples
$S_{1}$ and $S_{2}$. One possible way to do that is to define $\hat{L}_{1}=\sup_{A}\left|\frac{A\cap S_{1}}{\bigl|S_{1}\bigr|}-\frac{A\cap S_{2}}{\bigl|S_{2}\bigr|}\right|$.
However, this estimate is almost surely going to be $1$ if the underlying
distributions are absolutely continuous. Therefore, one has to restrict the class
from which $A$ can be selected to a class of VC dimension small enough
compared to the sizes of the samples. We claim that asymptotically,
as the sizes of the samples increase, one can increase the complexity
of the class until the clusters can be separated. 

\begin{algorithm}
\caption{\textbf{Double Sample Clustering (DSC)}}

\textbf{Inputs:}
\begin{itemize}
\item Samples $S_{1},S_{2}$ 
\item A binary learning algorithm $\alg$ that given samples $S_{1},S_{2}$
with weights $w_{1},w_{2}$ finds a classifier $h$ and an estimator
$e$ of the error of $h$.
\item A threshold $\tau>0$.
\end{itemize}
\textbf{Outputs:}
\begin{itemize}
\item A tree of classifiers
\end{itemize}
\textbf{Algorithm:}
\begin{enumerate}
\item Let $w_{1}=1$ and $w_{2}=\nicefrac{\left|S_{1}\right|}{\left|S_{2}\right|}$
\item Apply $\alg$ to $S_{1}$ \& $S_{2}$ with weights $w_{1}$ \&
$w_{2}$ to get the classifier $h$ and estimator $e$.
\item If $e\geq\frac{1}{2}-\tau$, 

\begin{enumerate}
\item return a tree with a single leaf.
\end{enumerate}
\item else

\begin{enumerate}
\item For $j=1,2$, let $S_{j}^{+}=\left\{ x\in S_{j}\,\,\mbox{s.t.}\,\, h\left(x\right)>0\right\} $
\item For $j=1,2$, let $S_{j}^{-}=\left\{ x\in S_{j}\,\,\mbox{s.t.}\,\, h\left(x\right)<0\right\} $
\item Let $T^{+}$ be the tree returned by the DSC algorithm applied to
$S_{1}^{+}$ and $S_{2}^{+}$
\item Let $T^{-}$ be the tree returned by the DSC algorithm applied to
$S_{1}^{-}$ and $S_{2}^{-}$
\item return a tree in which $c$ is at the root node and $T^{-}$ is its
left subtree and $T^{+}$ is its right subtree\end{enumerate}
\end{enumerate}
\end{algorithm}

Before we proceed, we recall a result of \cite{BBCP07}
that shows the relation between classification and the $L_{1}$ distance.
We will abuse the notation and treat $A$ both as a subset
and as a classifier. If we mix $D_{1}$ and $D_{2}$ with equal weights
then
\begin{eqnarray*}
\mbox{err}\left(A\right) & = & D_{1}\left(X\setminus A\right)+D_{2}\left(A\right)\\
 & = & 1-D_{1}\left(A\right)+D_{2}\left(A\right)\\
 & = & 1-\left(D_{1}\left(A\right)-D_{2}\left(A\right)\right)\,\,\,.
\end{eqnarray*}
Therefore, minimizing the error is equivalent to maximizing the $L_{1}$
distance.

The key observation for the DSC algorithm is that if $\phi_{i}^{1}\neq\phi_{i}^{2}$,
then a set $A$ that maximizes the $L_{1}$ distance between $D_{1}$
and $D_{2}$ is aligned with cluster boundaries (up to a measure zero).
\begin{wrapfigure}{}{0.5\columnwidth}%
\vspace*{-0.5in}
\begin{centering}
\includegraphics[width=0.45\textwidth]{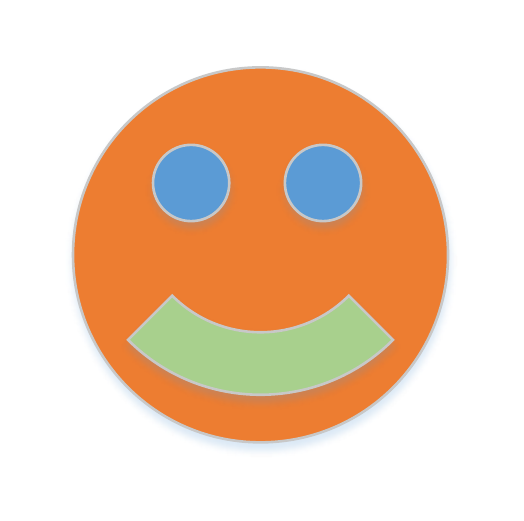}
\par\end{centering}
\vspace*{-0.3in}
\caption{\label{fig:DSC demo}Demonstration of the DSC algorithm. Assume
that $\Phi^{1}=\left(0.4,0.3,0.3\right)$ for the orange,
green and blue regions respectively and $\Phi^{2}=\left(0.5,0.1,0.4\right)$. The green region maximizes the $L_{1}$ distance
and therefore will be separated from the blue and orange. Conditioned
on these two regions, the mixture coefficients are  $\Phi^{1}_{\mbox{orange, blue}} = \left(\nicefrac{4}{7},\nicefrac{3}{7}\right)$
and $\Phi^{2}_{\mbox{orange, blue}} = \left(\nicefrac{5}{9},\nicefrac{4}{9}\right)$. The region that maximized
this conditional $L_{1}$ is the orange regions that will be separated
from the blue.}
\vspace*{-0.7in}
\end{wrapfigure}
Furthermore, $A$ contains all the clusters for which $\phi_{i}^{1}>\phi_{i}^{2}$
and does not contain all the clusters for which $\phi_{i}^{1}<\phi_{i}^{2}$.
Hence, if we split the space to $A$ and $\bar{A}$ we have few clusters
in each side. By applying the same trick recursively in each side we
keep on bisecting the space according to cluster boundaries until
subspaces that contain only a single cluster remain. These sub-spaces
cannot be further separated and hence the algorithm will stop. Figure~\ref{fig:DSC demo}
demonstrates this idea. The following lemma states this argument mathematically:

\begin{lemma} \label{lemma: DSC full knowledge} If $D_{j}=\sum_{i}\phi_{i}^{j}\theta_{i}$
then
\begin{enumerate}
\item $L_{1}\left(D_{1},D_{2}\right)\leq\sum_{i}\max\left(\phi_{i}^{1}-\phi_{i}^{2},0\right)$.
\item If $A^{*}=\cup_{i:\phi_{i}^{1}>\phi_{i}^{2}}C_{i}$ then $D_{1}\left(A^{*}\right)-D_{2}\left(A^{*}\right)=\sum_{i}\max\left(\phi_{i}^{1}-\phi_{i}^{2},0\right)$.
\item If $\forall i,\,\,\phi_{i}^{1}\neq\phi_{i}^{2}$ and $A$ is such
that $D_{1}\left(A\right)-D_{2}\left(A\right)=L_{1}\left(D_{1},D_{2}\right)$
then $\forall i,\,\,\,\theta_{i}\left(A\Delta A^{*}\right)=0$.
\end{enumerate}
\end{lemma}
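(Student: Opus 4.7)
The plan is to handle the three claims in order, leveraging that the disjoint-support hypothesis makes $\theta_i(A)$ essentially a $\{0,1\}$ indicator of whether $A$ contains $C_i$, and then to reduce everything to the bookkeeping identity
\[
D_1(A)-D_2(A)=\sum_i(\phi_i^1-\phi_i^2)\,\theta_i(A).
\]

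For part 1, I would start from the identity above and bound each summand by its positive part: $(\phi_i^1-\phi_i^2)\theta_i(A)\leq\max(\phi_i^1-\phi_i^2,0)$ since $\theta_i(A)\in[0,1]$. Taking the supremum over $A$ and then using symmetry (note $\sum_i(\phi_i^1-\phi_i^2)=0$ because both $\Phi^1$ and $\Phi^2$ are probability vectors, so $\sum_i\max(\phi_i^1-\phi_i^2,0)=\sum_i\max(\phi_i^2-\phi_i^1,0)$) controls $|D_1(A)-D_2(A)|$ and yields the $L_1$ bound.

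For part 2, I would use the disjoint support assumption: since $\theta_i(C_i)=1$ and the $C_j$'s are disjoint, $\theta_i(A^*)=1$ exactly when $i\in I^+:=\{i:\phi_i^1>\phi_i^2\}$ and $\theta_i(A^*)=0$ otherwise. Plugging into the identity gives $D_1(A^*)-D_2(A^*)=\sum_{i\in I^+}(\phi_i^1-\phi_i^2)=\sum_i\max(\phi_i^1-\phi_i^2,0)$, which combined with part 1 shows $A^*$ attains the supremum.

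For part 3, the strategy is to chase equality in the argument of part 1. Assuming $\phi_i^1\neq\phi_i^2$ for every $i$, write
\[
\sum_i\max(\phi_i^1-\phi_i^2,0)-\bigl(D_1(A)-D_2(A)\bigr)=\sum_{i\in I^+}(\phi_i^1-\phi_i^2)\bigl(1-\theta_i(A)\bigr)+\sum_{i\notin I^+}(\phi_i^2-\phi_i^1)\theta_i(A),
\]
in which every summand is nonnegative. If $A$ achieves the supremum, each summand vanishes, forcing $\theta_i(A)=1$ for $i\in I^+$ and $\theta_i(A)=0$ for $i\notin I^+$; that is, $\theta_i(A)=\theta_i(A^*)$ for every $i$. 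Using once more that $\theta_i$ is concentrated on $C_i$, I would then conclude $\theta_i(A\Delta A^*)=0$ by splitting the symmetric difference into $A\setminus A^*$ and $A^*\setminus A$ and checking each piece against $C_i$. The main subtlety — really the only non-routine point — is keeping the strict inequality hypothesis $\phi_i^1\neq\phi_i^2$ in view, since without it the indices with equality could be assigned arbitrarily to $A$ or its complement and uniqueness of $A^*$ (mod $\theta_i$) would fail.
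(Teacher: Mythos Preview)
Your proposal is correct and follows essentially the same line as the paper's proof: both parts 1 and 2 proceed via the identity $D_1(A)-D_2(A)=\sum_i(\phi_i^1-\phi_i^2)\,\theta_i(A\cap C_i)$ and the disjoint-support structure. For part 3 the paper argues by contradiction, splitting into the two cases $\phi_{i^*}^1>\phi_{i^*}^2$ and $\phi_{i^*}^1<\phi_{i^*}^2$, whereas you write the gap $\sum_i\max(\phi_i^1-\phi_i^2,0)-\bigl(D_1(A)-D_2(A)\bigr)$ as a single sum of nonnegative terms and read off the conclusion from equality; this is a cleaner repackaging of the same computation, and your explicit handling of the absolute value in the $L_1$ definition via the symmetry $\sum_i\max(\phi_i^1-\phi_i^2,0)=\sum_i\max(\phi_i^2-\phi_i^1,0)$ is a small point the paper leaves implicit.
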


We conclude from Lemma~\ref{lemma: DSC full knowledge} that if
$D_{1}$ and $D_{2}$ were explicitly known and one could have found
a classifier that best separates between the distributions,
that classifier would not break clusters as long as the mixing coefficients
are not identical. In order for this to hold when the separation
is applied recursively in the DSC algorithm it suffices to have that
for every $I\subseteq\left[1,\ldots,K\right]$ if $\left|I\right|>1$
and $i\in I$ then 
\[
\frac{\phi_{i}^{1}}{\sum_{i^{\prime}\in I}\phi_{i^{\prime}}^{1}}\neq\frac{\phi_{i}^{2}}{\sum_{i^{\prime}\in I}\phi_{i^{\prime}}^{2}}
\]
to guarantee that at any stage of the algorithm clusters will not
be split by the classifier (but may be sections of measure zero). This
is also sufficient to guarantee that the leaves will contain single
clusters.

In the case where data is provided through a finite sample then some
book-keeping is needed. However, the analysis follows the same path.
We show that with samples large enough, clusters are only minimally
broken. For this to hold we require that the learning algorithm $\alg$
separates the clusters according to this definition:

\begin{definition} For $I\subseteq\left[1,\ldots,K\right]$ let $c_I : X \mapsto \{\pm 1\}$ be such that $c_I(x) =1$ if $x\in\cup_{i\in I}C_i$ and $c_I(x)=-1$ otherwise.
A learning algorithm $\alg$ \emph{separates} $C_{1},\ldots,C_{K}$
if for every $\epsilon,\delta>0$ there exists $N$ such that for
every $n>N$ and every measure $\nu$ over $X\times\left\{ \pm1\right\}$ with probability $1-\delta$ over samples from $\nu^n$:
\begin{enumerate}
\item The algorithm $L$ returns an hypothesis $h:X\mapsto\left\{ \pm1\right\} $
and an error estimator $e\in\left[0,1\right]$ such that $\left|\Pr_{x,y\sim\nu}\left[h\left(x\right)\neq y\right]-e\right|\leq\epsilon$
\item $h$ is such that 
\[
\forall I,\,\,\,\,\Pr_{x,y\sim\nu}\left[h\left(x\right)\neq y\right]<\Pr_{x,y\sim\nu}\left[c_{I}\left(x\right)\neq y\right]+\epsilon\,\,\,.
\]

\end{enumerate}
\end{definition}

Before we introduce the main statement, we define what it means for
a tree to cluster the mixture components:

\begin{definition} A \emph{clustering tree} is a tree in which in
each internal node is a classifier and the points that end in a certain
leaf are considered a cluster. A\emph{ }clustering tree \emph{$\epsilon$-clusters}
the mixture coefficient $\theta_{1},\ldots,\theta_{K}$ if for every
$i\in1,\ldots,K$ there exists a leaf in the tree such that the cluster
$L\subseteq X$ associated with this leaf is such that $\theta_{i}\left(L\right)\geq1-\epsilon$
and $\theta_{i^{\prime}}\left(L\right)<\epsilon$ for every $i^{\prime}\neq i$.

\end{definition}

To be able to find a clustering tree, the two mixtures have to be
different. The following definition captures the gap which is the
amount of difference between the mixtures.

\begin{definition} \label{def: gap}Let $\Phi^{1}$ and $\Phi^{2}$
be two mixture vectors. The \emph{gap}, $g$, between them is
\[
g=\min\left\{ \left|\frac{\phi_{i}^{1}}{\sum_{i^{\prime}\in I}\phi_{i^{\prime}}^{1}}-\frac{\phi_{i}^{2}}{\sum_{i^{\prime}\in I}\phi_{i^{\prime}}^{2}}\right|\,:\, I\subseteq\left[1,\ldots,K\right]\mbox{ and }\left|I\right|>1\right\} \,\,\,\,.
\]

We say that $\Phi$ is $b$ \emph{bounded away} from zero if 
$
b\leq\min_{i}\phi_{i}
$.

\end{definition}

\begin{theorem} \label{Thm: DSC} Assume that $\alg$ separates $\theta_{1},\ldots,\theta_{K}$,
there is a gap $g>0$ between $\Phi^{1}$ and $\Phi^{2}$ and both
$\Phi^{1}$ and $\Phi^{2}$ are bounded away from zero by $b>0$.
For every $\epsilon^{*},\delta^{*}>0$ there exists $N=N\left(\epsilon^{*},\delta^{*},g,b,K\right)$
such that given two random samples of sizes $N<n_{1},n_{2}$ from
the two mixtures, with probability of at least $1-\delta^{*}$ the
DSC algorithm will return a clustering tree which $\epsilon^{*}$-clusters
$\theta_{1},\ldots,\theta_{K}$ when applied with the threshold $\tau=\nicefrac{g}{8}$.
\end{theorem}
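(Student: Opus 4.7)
The plan is to induct on the tree produced by DSC, showing with high probability that each internal node splits without materially breaking clusters and each leaf contains essentially one cluster, so that the returned tree $\epsilon^{\ast}$-clusters the $\theta_i$'s. Fix an internal node; inductively, the samples reaching it come (up to small errors accumulated so far) from the restrictions $D_j|_B$ where $B = \cup_{i \in I} C_i$ for some $I \subseteq \{1,\ldots,K\}$, and the normalized conditional mixtures are $\tilde D_j = \sum_{i \in I} \tilde\phi_i^j \theta_i$ with $\tilde\phi_i^j = \phi_i^j / \sum_{i' \in I} \phi_{i'}^j$. Since $\phi_i^j \geq b$, the node receives at least $\Omega(bN)$ samples with high probability by a Chernoff bound, so the separates property can be invoked.

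I distinguish two regimes. If $|I|=1$, then $\tilde D_1 = \tilde D_2 = \theta_i$, so every classifier has true error $1/2$ under equal weighting; the separates property gives an estimator $e$ within $\epsilon$ of $1/2$, so for $\epsilon < \tau = g/8$ the stopping condition $e \geq 1/2 - \tau$ fires and the algorithm correctly creates a leaf. If $|I|\geq 2$, the gap assumption gives $|\tilde\phi_i^1 - \tilde\phi_i^2| \geq g$ for every $i \in I$. Lemma~\ref{lemma: DSC full knowledge} then yields $L_1(\tilde D_1, \tilde D_2) \geq g$ via the cluster-aligned classifier $c^{\ast} = c_{I^+}$ with $I^+ = \{i \in I : \tilde\phi_i^1 > \tilde\phi_i^2\}$, so the Bayes error is at most $1/2 - g/2$. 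The separates property then returns an $h$ with error at most $1/2 - g/2 + \epsilon$ and estimator $e \leq 1/2 - g/2 + 2\epsilon < 1/2 - g/8 = 1/2 - \tau$, so the algorithm correctly recurses.

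The main technical step is quantifying how much $h$ may break individual clusters. Writing $U = \{h=+1\} \triangle \{c^{\ast}=+1\}$ and computing the equal-weight error difference directly,
\[
\mathrm{err}(h) - \mathrm{err}(c^{\ast}) \;=\; \tfrac{1}{2}\sum_{i\in I}|\tilde\phi_i^1 - \tilde\phi_i^2|\,\theta_i(U \cap C_i),
\]
because on each $C_i$ the disagreement costs exactly $|\tilde\phi_i^1 - \tilde\phi_i^2|$ per unit $\theta_i$-mass, with a sign fixed by the optimality of $c^{\ast}$. Since the excess is $\leq \epsilon$ and each coefficient is $\geq g$, every $\theta_i(U \cap C_i) \leq 2\epsilon/g$. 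Thus the two children's samples are, up to $\theta_i$-measure $2\epsilon/g$ per cluster, drawn from the ideal restrictions to $\cup_{i \in I^+} C_i$ and $\cup_{i \in I^-} C_i$, and the induction proceeds on strict refinements $I^+, I^-$ of $I$.

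The tree has at most $2K-1$ nodes and depth at most $K-1$. Choosing $\epsilon = g\epsilon^{\ast}/(4K)$, invoking the separates property with failure $\delta^{\ast}/(2K)$ per node (which determines the required $N$ together with the Chernoff sample-size event), and union-bounding over all nodes gives overall failure $\leq \delta^{\ast}$; the per-cluster mass leaked along any root-to-leaf path is at most $(\mathrm{depth}) \cdot 2\epsilon/g \leq \epsilon^{\ast}$, yielding the claimed $\epsilon^{\ast}$-clustering. The principal obstacle is the per-cluster decomposition identity above: translating the scalar excess-error bound from the separates property into a per-cluster symmetric-difference bound crucially relies on the gap lower-bounding every $|\tilde\phi_i^1 - \tilde\phi_i^2|$, not just one of them, and on being careful that the small errors accumulated from parent splits do not spoil either the gap-based Bayes bound or the Chernoff-based sample count at the current node.
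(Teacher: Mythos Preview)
Your overall plan is the same as the paper's: induct down the tree, show that when more than one cluster is ``present'' the learned split is nearly cluster-aligned and the algorithm recurses, and when only one is present the estimator $e$ is close to $1/2$ so the algorithm stops; then sum the per-level leakage. Your per-cluster identity
\[
\mathrm{err}(h)-\mathrm{err}(c^{\ast})=\tfrac12\sum_{i\in I}\bigl|\tilde\phi_i^{1}-\tilde\phi_i^{2}\bigr|\,\theta_i(U)
\]
is exactly the clean-case version of what drives the paper's Lemma~\ref{lemma: gamma}, and your leakage bound $\theta_i(U)\le 2\epsilon/g$ and additive accumulation $\hat\gamma_l=O(l\epsilon/g)$ match the paper's bookkeeping.

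The genuine gap is the step you flag but do not carry out: after the first split, the region $A$ at a node is \emph{not} $\cup_{i\in I}C_i$; it is $\{h_1=s_1,\ldots,h_{l}=s_l\}$. The conditional weights at this node are $p_i^{\,j}=\phi_i^{\,j}\theta_i(A)/D_j(A)$, not the idealized $\tilde\phi_i^{\,j}$. Two things break if you simply reuse the idealized identity. First, $c_{I^{+}}$ (with $I^{+}$ defined via the $\tilde\phi$'s) need not be Bayes for the actual $\nu$ at the node, so the term-by-term nonnegativity you use to pass from the scalar excess $\le\epsilon$ to a per-cluster bound can fail; you must instead take the Bayes set $I=\{i:p_i^{1}>p_i^{2}\}$ and then separately argue that this $I$ still properly splits the ``big'' clusters. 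Second, the coefficients in the decomposition are $|p_i^{1}-p_i^{2}|$, which are only $\ge g-O(\gamma/b)$, not $\ge g$. The paper's $\gamma$-big/$\gamma$-small machinery (Lemmas~\ref{lemma cI}--\ref{lemma: multiple big clusters}) is precisely what handles both points: it lower-bounds the perturbed gap by $g-3\gamma/b$, shows the Bayes $I$ at the contaminated node still contains some but not all big clusters, and proves the stopping criterion still separates the one-cluster case from the many-cluster case under contamination. Your sketch would be complete once you replace the idealized $\tilde\phi$'s by the actual $p$'s and redo the identity and the gap lower bound under a running contamination parameter $\gamma$; without that, the induction past the root is not justified.
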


\section{Empirical evidence}

\begin{figure}
  \vspace*{-0.4in}
  \centering
  \begin{subfigure}[Accuracy with spherical Gaussians   \label{fig:spherical}
 ]
{
    \includegraphics[trim = 20mm 55mm 20mm 25mm,width=0.45\textwidth]{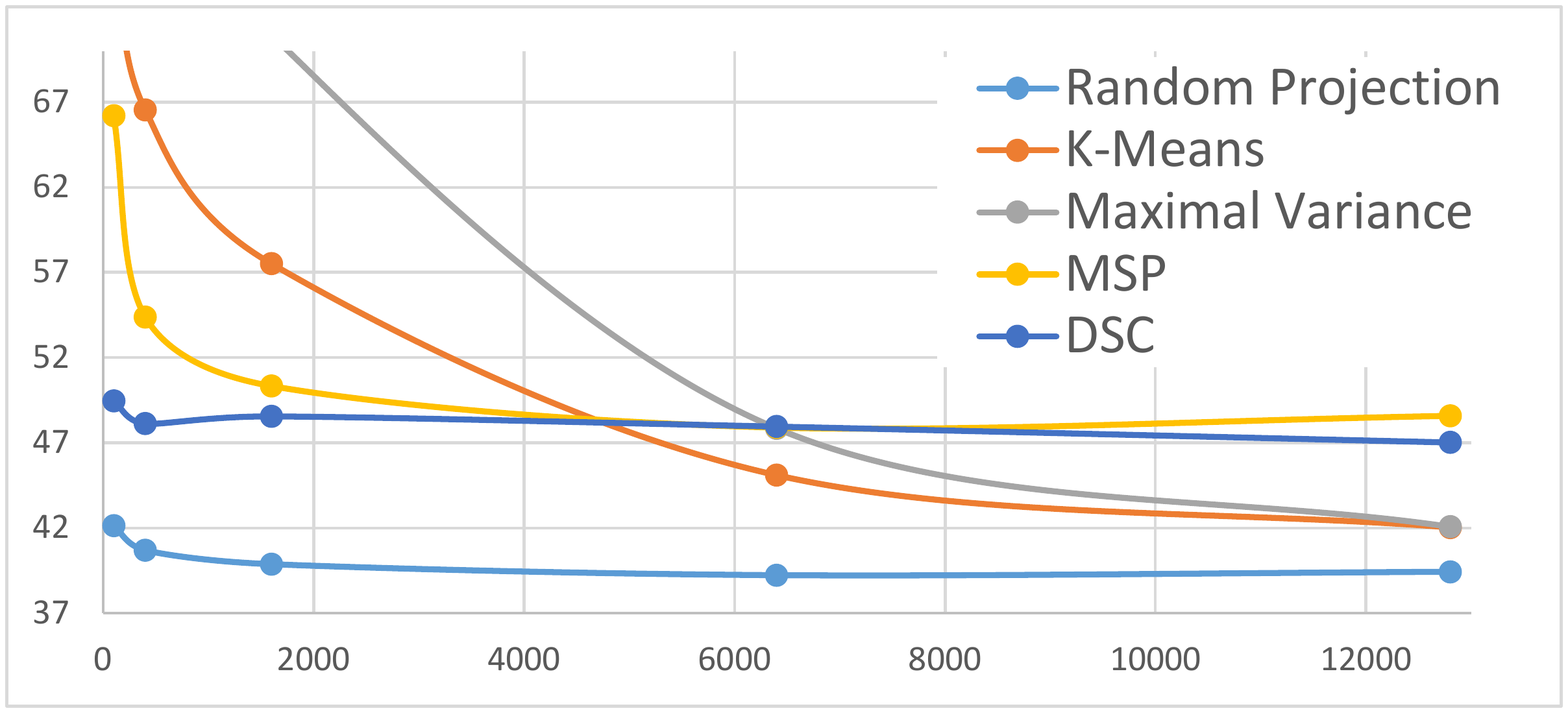}}
  \end{subfigure}
  \begin{subfigure}[Average accuracy with skewed Gaussians\label{fig:skewed}]{
    \includegraphics[trim = 20mm 55mm 20mm 25mm, width=0.45\textwidth]{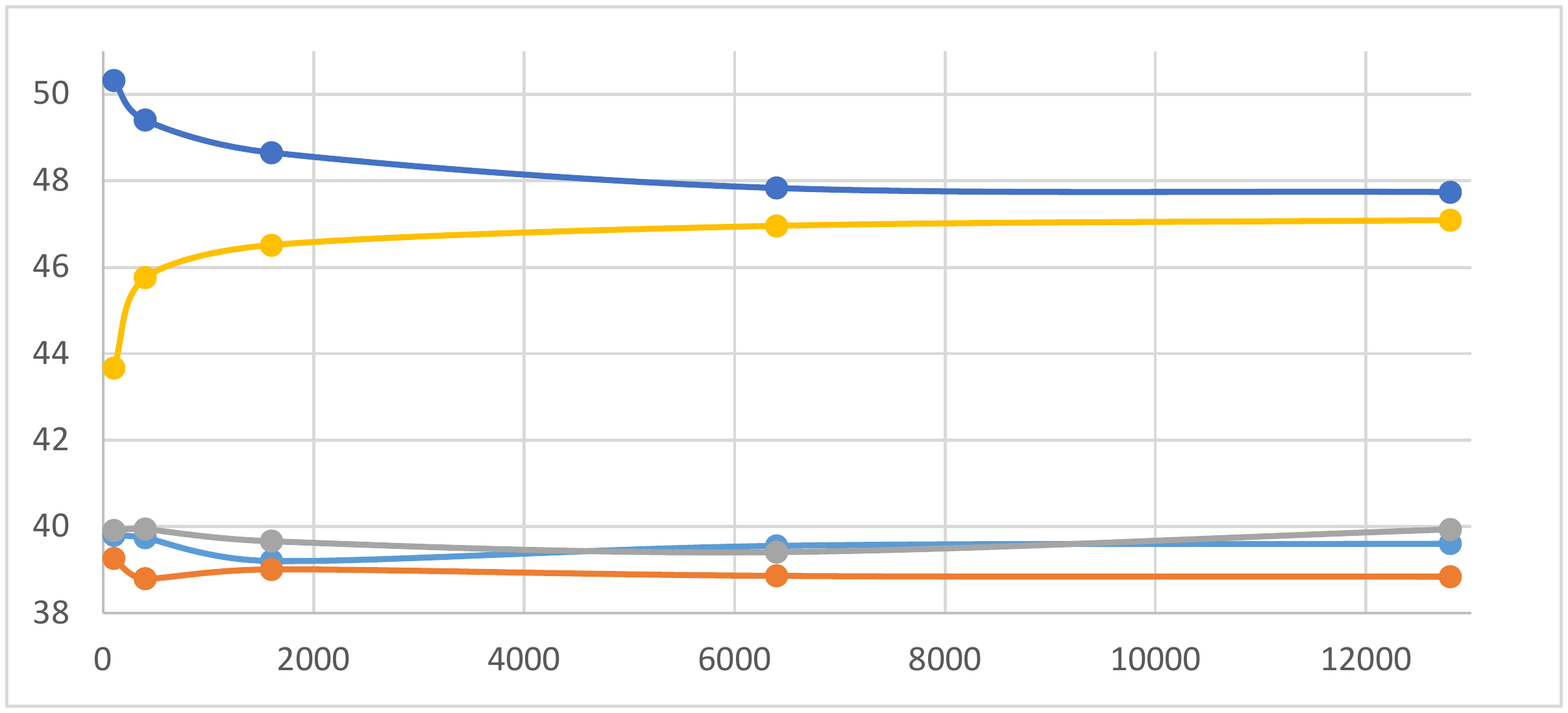}}
    
  \end{subfigure}
\vspace*{-0.15in}
\caption{
\textbf{Comparison the different algorithms:} The
    dimension of the problem is presented in the X axis and the accuracy
    on the Y axis.
  \vspace*{-0.2in}
}
\end{figure}

We conducted several experiments with synthetic data to compare 
different methods when clustering in high dimensional spaces. The
synthetic data was generated from three Gaussians with centers at points $\left(0,0\right),\,\left(3,0\right)$
and $\left(-3,+3\right)$. On top of that, we added
additional dimensions with normally distributed noise. In the first
experiment we used unit variance for all dimensions. In the
second experiment we skewed the distribution so that the variance in the
other features is 5. 

Two sets of mixing coefficients for the three Gaussians were chosen
at random 100 times by selecting three uniform values from $\left[0,\,1\right]$
and normalizing them to sum to $1$. We generated two samples
with 80 examples each from the two mixing coefficients. The DSC
and MSP algorithm received these two samples as inputs while the reference
algorithms, which are not designed to use multiple samples, received
the combined set of 160 points as input.

We ran 100 trials.  In each trial, each of the algorithms finds 3 Gaussians. We then 
measure the percentage of the points associated with
the true originating Gaussian after making the best assignment of the inferred centers to the true Gaussians.

We compared several algorithms. K-means was used on the data
as a baseline. We compared three low dimensional projection algorithms.
Following \cite{Dasgupta99} we used random projections as the first
of these. Second, following \cite{VempalaWang02} we used PCA to project on
the maximal variance subspace. MSP was used as the third projection
algorithm. In all projection algorithm we first projected on
a one dimensional space and then applied K-means to find the clusters.
Finally, we used the DSC algorithm. The DSC algorithm uses the
classregtree function in MATLAB as its learning oracle. Whenever 
K-means was applied, the MATLAB implementation of this procedure was
used with 10 random initial starts.

Figure~\ref{fig:spherical} shows the results of the first experiment
with unit  variance in the noise dimensions. In this setting,
the Maximal Variance method is expected to work well since the first
two dimensions have larger expected variance. Indeed we see that this
is the case. However, when the number of dimensions is large, MSP and DSC
 outperform the other methods; this corresponds to the difficult regime of low signal to noise ratio. In $12800$ dimensions, MSP outperforms
Random Projections $90\%$ of the time, Maximal Variance $80\%$
of the time, and  K-means $79\%$ of the time. DSC outperforms
Random Projections, Maximal Variance and K-means $84\%$, $69\%$, and $66\%$ of the time
respectively. Thus the p-value in all these experiments is $<0.01$.

Figure~\ref{fig:skewed} shows the results of the experiment in which
 the variance in the noise dimensions is higher which creates a more challanging problem.
 In this case,
we see that all the reference methods suffer significantly, but the
MSP and the DSC methods obtain similar results as in the previous
setting. Both the MSP and the DSC algorithms win over Random Projections,
Maximal Variance and K-means more than $78\%$ of the time when the dimension
is $400$ and up. The p-value of these experiments is
$<1.6\times10^{-7}$.
\section{Conclusions}
The mixture problem examined here is closely related to the problem of clustering. 
Most clustering data can be viewed as points generated from multiple underlying 
distributions or generating functions, and clustering can be seen as the process of 
recovering the structure of or assignments to these distributions.
We presented two algorithms for the mixture problem that can be viewed as clustering 
algorithms. The MSP algorithm uses multiple
samples to find a low dimensional space to project the data to.
The DSC algorithm builds a clustering tree assuming that the clusters
are disjoint. We proved that these algorithms work under milder assumptions than 
currently known methods.  The key message in this work is that when multiple samples 
are available, often it is best not to pool the data into one large sample, but that 
the structure in the different samples can be leveraged to improve clustering power.

\bibliographystyle{amsplain}
\bibliography{multiSample}

\shortVersion{
\appendix
\section{Supplementary Material}

Here we provide detailed analysis of the results presented in the
paper that could not fit due to space limitations.

\subsection{Proof of Lemma~\ref{lemma: DSC full knowledge}}

\begin{proof}
\begin{enumerate}
\item Let $A$ be a measurable set let $A_{i}=A\cap C_{i}$. 
\begin{eqnarray*}
D_{1}\left(A\right)-D_{2}\left(A\right) & = & \sum_{i}D_{1}\left(A_{i}\right)-D_{2}\left(A_{i}\right)\\
 & = & \sum_{i}\phi_{i}^{1}\theta_{i}\left(A_{i}\right)-\phi_{i}^{2}\theta_{i}\left(A_{i}\right)\\
 & = & \sum_{i}\theta_{i}\left(A_{i}\right)\left(\phi_{i}^{1}-\phi_{i}^{2}\right)\\
 & \leq & \sum_{i}\max\left(\phi_{i}^{1}-\phi_{i}^{2},0\right)\,\,\,.
\end{eqnarray*}

\item Let $A^{*}=\cup_{i:\phi_{i}^{1}>\phi_{i}^{2}}C_{i}$ then
\begin{eqnarray*}
D_{1}\left(A^{*}\right)-D_{2}\left(A^{*}\right) & = & \sum_{i:\phi_{i}^{1}>\phi_{i}^{2}}\theta_{i}\left(C_{i}\right)\left(\phi_{i}^{1}-\phi_{i}^{2}\right)\\
 & = & \sum_{i}\max\left(\phi_{i}^{1}-\phi_{i}^{2},0\right)\,\,\,.
\end{eqnarray*}

\item Assume that $\forall i,\,\,\phi_{i}^{1}\neq\phi_{i}^{2}$ and $A$
is such that $D_{1}\left(A\right)-D_{2}\left(A\right)=L_{1}\left(D_{1},D_{2}\right)$.
As before let $A_{i}=A\cap C_{i}$ then
\[
D_{1}\left(A\right)-D_{2}\left(A\right)=\sum_{i}\theta_{i}\left(A_{i}\right)\left(\phi_{i}^{1}-\phi_{i}^{2}\right)\,\,\,.
\]
If exists $i$ such that $\theta_{i}\left(A\Delta A^{*}\right)\neq0$
then there could be two cases. If $i^{*}$ is such that $\phi_{i^{*}}^{1}>\phi_{i^{*}}^{2}$
then $\theta_{i^{*}}\left(A^{*}\right)=1$ hence $\theta_{i^{*}}\left(A\right)<1$.
Therefore,
\begin{eqnarray*}
D_{1}\left(A\right)-D_{2}\left(A\right) & \leq & \sum_{i}\theta_{i}\left(A_{i}\right)\max\left(\phi_{i}^{1}-\phi_{i}^{2},0\right)\\
 & < & \sum_{i}\max\left(\phi_{i}^{1}-\phi_{i}^{2},0\right)
\end{eqnarray*}
which contradicts the assumptions. In the same way, if $i^{*}$ is
such that $\phi_{i^{*}}^{1}<\phi_{i^{*}}^{2}$ then $\theta_{i^{*}}\left(A^{*}\right)=0$
hence $\theta_{i^{*}}\left(A\right)>0$. Therefore,
\begin{eqnarray*}
D_{1}\left(A\right)-D_{2}\left(A\right) & \leq & \sum_{i}\theta_{i}\left(A_{i}\right)\max\left(\phi_{i}^{1}-\phi_{i}^{2},0\right)+\theta_{i^{*}}\left(A_{i^{*}}\right)\left(\phi_{i}^{1}-\phi_{i}^{2}\right)\\
 & < & \sum_{i}\theta_{i}\left(A_{i}\right)\max\left(\phi_{i}^{1}-\phi_{i}^{2},0\right)
\end{eqnarray*}

\end{enumerate}
\end{proof}

\subsection{Proof of MSP Analysis theorem}

\begin{proof} \textbf{of Theorem~\ref{thm: MSP-Analysis}}
\begin{enumerate}
\item The computational complexity is straight forward. The MSP algorithm
first computes the expected value for each of the samples. For every
sample this takes $nN_{j}$. Once the expected values were computed,
computing each of the $\bar{v}_{j}$ vector is $2n$ operations.
\item Recall that $D_{j}=\sum_{i}\phi_{i}^{j}\theta_{i}$. We can rewrite
it as
\[
D_{j}=\sum_{i=1}^{K}\phi_{i}^{j}\mu_{i}+\sum_{i=1}^{K}\phi_{i}^{j}\left(\theta_{i}-\mu_{i}\right)=E_{j}+\sum_{i=1}^{K}\phi_{i}^{j}\left(\theta_{i}-\mu_{i}\right)\,\,\,.
\]

Note that for every $i$, $\left(\theta_{i}-\mu_{i}\right)$ has a
zero mean and variance bounded by $\sigmamax^{2}$. Since $\phi_{i}^{j}\geq0$
and $\sum_{i}\phi_{i}^{j}=1$ then the measure $\sum_{i=1}^{K}\phi_{i}^{j}\left(\theta_{i}-\mu_{i}\right)$
has zero mean and variance bounded by $\sigmamax^{2}$. Hence, $D_{j}$
is a measure with mean $E_{j}$ and variance bounded by $\sigmamax^{2}$.
Since $\bar{E}_{j}$ is obtained by averaging $N_{j}$ instances,
we get, from Chebyshev's inequality combined with the union bound
that
\[
\Pr\left[\norm{E_{j}-\bar{E}_{j}}>\epsilon\right]\leq\frac{n\sigmamax^{2}}{N_{j}\epsilon^{2}}\,\,\,.
\]
Since there are $m$ estimators, $\bar{E}_{1},\ldots,\bar{E}_{m}$,
using the union bound we obtain
\[
\Pr\left[\sup_{j}\norm{E_{j}-\bar{E}_{j}}>\epsilon\right]\leq\sum_{j}\frac{n\sigmamax^{2}}{N_{j}\epsilon^{2}}\,\,\,.
\]

\item Recall that $\bar{\mu}_{i}$ is the projection of $\mu_{i}$ on the
space $\bar{V}=\mbox{span}\left(\bar{v}_{1},\ldots,\bar{v}_{m-1}\right)$.
Therefore, $\bar{\mu}_{i}=\arg\min_{v\in\bar{V}}\left(\norm{\mu_{i}-v}\right).$
Since $\mu_{i}\in\mbox{span}\left\{ v_{1},\ldots,v_{m-1}\right\} $
then $\mu_{i}=\sum\alpha_{j}^{i}v_{j}$. 
\begin{eqnarray*}
\norm{\mu_{i}-\bar{\mu}_{i}} & \leq & \norm{\sum\alpha_{j}v_{j}-\sum\alpha_{j}\bar{v}_{j}}\\
 & \leq & \sum\left|\alpha\right|_{j}\norm{v_{j}-\bar{v}_{j}}\,\,\,.
\end{eqnarray*}
Hence, if $A=\max_{i}\sum\left|\alpha_{j}^{i}\right|$ then with probability
of at least $1-\frac{n\sigmamax^{2}}{\epsilon^{2}}\sum_{j}\frac{1}{n_{j}}$
\[
\max_{i}\norm{\mu_{i}-\bar{\mu}_{i}}\leq\epsilon A\,\,\,.
\]
Furthermore,
\[
\max_{i,i^{\prime}}\left|\norm{\mu_{i}-\mu_{i^{\prime}}}-\norm{\bar{\mu}_{i}-\bar{\mu}_{i^{\prime}}}\right|\leq2\epsilon A\,\,\,.
\]

\end{enumerate}
\end{proof}

It is possible to improve upon the bounds presented here. We can get
sharper bounds on the probability of success in several ways:
\begin{enumerate}
\item If we assume that the sample space is bounded we can use Bernstein's
inequality instead of Chebyshev's inequality
\item If we assume that the covariance matrix is diagonal we can replace
the union bounded with better concentration of measure bounds
\item If we assume that the distributions are Gaussians we can use tail
bounds on these distribution instead of Chebyshev's inequality
\end{enumerate}
To simplify the presentation, we do not derive the bounds for these
specific conditions.

\subsection{Proof of Theorem~\ref{Thm: DSC}}

The analysis we use several lemmas. To simplify the notation we define
the following assumptions:

\begin{assumption} The gap between $\Phi^{1}$ and $\Phi^{2}$ is
$g>0$. \label{assumption: gap}

\end{assumption}

\begin{assumption} Both $\Phi^{1}$ and $\Phi^{2}$ are bounded away
from zero by $b>0$.\label{assumption: bounded from zero}

\end{assumption}

\begin{definition} For the set $A$, we say that the $i$'th cluster
is $\gamma$-\emph{big} if $\theta_{i}\left(A\right)\geq1-\gamma$
and we say that it is $\gamma$-\emph{small} if $\theta_{i}\left(A\right)\leq\gamma$.\end{definition}

\begin{assumption} For the set $A$ all clusters are either $\gamma$-big
or $\gamma$-small and there exists at least one $\gamma$-big cluster.\label{assumption: gamma}

\end{assumption}

\begin{assumption} The classifier $h$ and the estimator $e$ are
such that\label{assumption h and e}
\[
\left|\Pr_{x,y\sim\nu}\left[h\left(x\right)\neq y\right]-e\right|\leq\epsilon
\]
and 
\[
\forall I,\,\,\,\,\Pr_{x,y\sim\nu}\left[h\left(x\right)\neq y\right]<\Pr_{x,y\sim\nu}\left[c_{I}\left(x\right)\neq y\right]+\epsilon
\]
where $\nu$ is a measure on $X\times\left\{ \pm1\right\} $ such that
the measure of $B\subseteq X\times\left\{ \pm1\right\} $ is
\[
\nu\left(B\right)=\frac{1}{2}\left(\frac{D_{1}\left(\left\{ x\in A\,\,:\,\,\left(x,1\right)\in B\right\} \right)}{D_{1}\left(A\right)}+\frac{D_{2}\left(\left\{ x\in A\,\,:\,\,\left(x,-1\right)\in B\right\} \right)}{D_{2}\left(A\right)}\right)\,\,\,.
\]

\end{assumption}

Using these assumptions we turn to prove the lemmas.

\begin{lemma} \label{lemma cI} Under assumptions \ref{assumption: bounded from zero}
and \ref{assumption: gamma}, if $I=\left\{ i\,:\,\frac{\phi_{i}^{1}}{\sum_{i^{\prime}}\phi_{i^{\prime}}^{1}\theta_{i^{\prime}}\left(A\right)}>\frac{\phi_{i}^{2}}{\sum_{i^{\prime}}\phi_{i^{\prime}}^{2}\theta_{i^{\prime}}\left(A\right)}\right\} $,
there are more than a single $\gamma$-big cluster and $\gamma<\min\left(\nicefrac{b}{2},\,\nicefrac{gb}{k+3}\right)$
then
\[
\Pr_{x,y\sim\nu}\left[c_{I}\left(x\right)\neq y\right]\leq\frac{1}{2}\left(1-g\left(1-\gamma\right)+\frac{3K\gamma}{b}\right)
\]
where $\nu$ is as defined in assumption~\ref{assumption h and e}.
Moreover, the set $I$ contains at least one $\gamma$-big cluster
but does not contain all the $\gamma$-big clusters.\end{lemma}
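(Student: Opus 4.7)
The plan is to rewrite $\Pr_\nu[c_I(x)\neq y]$ in terms of the conditional mixing weights of $D_1|_A$ and $D_2|_A$ and then lower bound the ``margin'' attained by $c_I$ using the gap assumption. Set
\[
\tilde\phi_i^j \;:=\; \frac{\phi_i^j\,\theta_i(A)}{D_j(A)},\qquad D_j(A)\;=\;\sum_{i''}\phi_{i''}^j\theta_{i''}(A),
\]
so that $D_j|_A$ has mixing weights $\tilde\phi_i^j$. Because the $C_i$ are disjoint and $\theta_i(C_i\cap A)=\theta_i(A)$, unpacking the definition of $\nu$ in assumption~\ref{assumption h and e} yields
\[
\Pr_\nu[c_I(x)\neq y] \;=\; \frac{1}{2}\Big(1-\sum_{i\in I}\Delta_i\Big),\qquad \Delta_i:=\tilde\phi_i^1-\tilde\phi_i^2,
\]
so the task reduces to showing $\sum_{i\in I}\Delta_i \geq g(1-\gamma)-3K\gamma/b$.

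Let $J=\{i:\theta_i(A)\geq 1-\gamma\}$ be the indices of the $\gamma$-big clusters and set $S^j=\sum_{i\in J}\phi_i^j$. Writing $D_j(A)-S^j = -\sum_{i\in J}\phi_i^j(1-\theta_i(A))+\sum_{i\notin J}\phi_i^j\theta_i(A)$ and using assumption~\ref{assumption: gamma} to bound each sum by $\gamma$ gives $|D_j(A)-S^j|\leq\gamma$. Combined with $S^j\geq b$ from assumption~\ref{assumption: bounded from zero} and $D_j(A)\geq(1-\gamma)S^j$, this produces the perturbation bound
\[
\Big|\frac{\phi_i^j}{D_j(A)}-\frac{\phi_i^j}{S^j}\Big| \;\leq\; \frac{\phi_i^j\,\gamma}{S^j\,D_j(A)} \;\leq\; \frac{\gamma}{(1-\gamma)b}\qquad (i\in J).
\]
Since $|J|\geq 2$ by hypothesis, the gap assumption~\ref{assumption: gap} applied to the subset $J$ yields $|\phi_i^1/S^1-\phi_i^2/S^2|\geq g$ for every $i\in J$. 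Combined with the identity $\sum_{i\in J}(\phi_i^1/S^1-\phi_i^2/S^2)=0$, the positive part of the differences over $J$ is at least $g$, and both positive and negative signs must occur on $J$.

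The standing assumption $\gamma<gb/(K+3)$ makes the perturbation $2\gamma/((1-\gamma)b)$ strictly smaller than $g$, so on $J$ the signs of $\phi_i^1/D_1(A)-\phi_i^2/D_2(A)$ coincide with those of $\phi_i^1/S^1-\phi_i^2/S^2$. Hence $I\cap J$ is exactly the positive-sign part of $J$, which is non-empty and a proper subset of $J$—this is the ``moreover'' claim. Finally, using $\tilde\phi_i^j=\theta_i(A)\phi_i^j/D_j(A)$ with $\theta_i(A)\geq 1-\gamma$ on $J$, and noting that the small-cluster terms $i\in I\setminus J$ contribute $\Delta_i\geq 0$,
\[
\sum_{i\in I}\Delta_i \;\geq\; (1-\gamma)\sum_{i\in I\cap J}\Big(\tfrac{\phi_i^1}{D_1(A)}-\tfrac{\phi_i^2}{D_2(A)}\Big) \;\geq\; (1-\gamma)\Big(g - |I\cap J|\cdot\tfrac{2\gamma}{(1-\gamma)b}\Big),
\]
and $|I\cap J|\leq K$ together with $\gamma<b/2$ absorbs the correction into the stated form $g(1-\gamma)-3K\gamma/b$.

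The main obstacle is the bookkeeping: one must simultaneously exploit the gap on the unconditional weights renormalised to $J$, the $O(\gamma)$ discrepancy between $D_j(A)$ and $S^j$ induced by contamination from the $\gamma$-small clusters, and the $(1-\gamma)$ shrinkage from $\theta_i(A)$ on the big clusters, while verifying that the sign-agreement under $\gamma<\min(b/2,\,gb/(K+3))$ really identifies $I\cap J$ with the positive-sign part of $J$.
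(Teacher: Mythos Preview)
Your proposal is correct and follows essentially the same route as the paper: rewrite $\Pr_\nu[c_I\neq y]$ as $\tfrac12(1-\sum_{i\in I}\Delta_i)$, compare the conditional weights $\phi_i^j/D_j(A)$ to the renormalised weights $\phi_i^j/S^j$ on $J$ via an $O(\gamma/b)$ perturbation, and invoke the gap on $J$. Your sign-agreement argument for the ``moreover'' clause is in fact tidier than the paper's contradiction argument (and your final numerical bound $g(1-\gamma)-2K\gamma/b$ is slightly sharper than the stated $3K\gamma/b$), but the overall architecture is the same.
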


\begin{proof} Let $J$ be the set of $\gamma$ big clusters. From
the definition of $\gamma$ we have that
\begin{eqnarray*}
\Pr_{x,y\sim\nu}\left[c_{I}\left(x\right)\neq y\right] & = & \frac{1}{2}\left(\sum_{i\notin I}\frac{\phi_{i}^{1}\theta_{i^{\prime}}\left(A\right)}{\sum_{i^{\prime}}\phi_{i^{\prime}}^{1}\theta_{i^{\prime}}\left(A\right)}+\sum_{i\in I}\frac{\phi_{i}^{2}\theta_{i^{\prime}}\left(A\right)}{\sum_{i^{\prime}}\phi_{i^{\prime}}^{2}\theta_{i^{\prime}}\left(A\right)}\right)\\
 & = & \frac{1}{2}\left(1-\sum_{i\in I}\left(\frac{\phi_{i}^{1}}{\sum_{i^{\prime}}\phi_{i^{\prime}}^{1}\theta_{i^{\prime}}\left(A\right)}-\frac{\phi_{i}^{2}}{\sum_{i^{\prime}}\phi_{i^{\prime}}^{2}\theta_{i^{\prime}}\left(A\right)}\right)\theta_{i^{\prime}}\left(A\right)\right)\\
 & \leq & \frac{1}{2}\left(1-\sum_{i\in I\cap J}\left(\frac{\phi_{i}^{1}}{\sum_{i^{\prime}}\phi_{i^{\prime}}^{1}\theta_{i^{\prime}}\left(A\right)}-\frac{\phi_{i}^{2}}{\sum_{i^{\prime}}\phi_{i^{\prime}}^{2}\theta_{i^{\prime}}\left(A\right)}\right)\theta_{i^{\prime}}\left(A\right)\right)\\
 & \leq & \frac{1}{2}\left(1-\sum_{i\in I\cap J}\left(\frac{\phi_{i}^{1}}{\sum_{i^{\prime}\in J}\phi_{i^{\prime}}^{1}+\gamma}-\frac{\phi_{i}^{2}}{\sum_{i^{\prime}\in J}\phi_{i^{\prime}}^{2}-\gamma}\right)\theta_{i^{\prime}}\left(A\right)\right)\,\,\,.
\end{eqnarray*}

Due to assumption \ref{assumption: bounded from zero} 
\begin{eqnarray}
\frac{\phi_{i}^{1}}{\sum_{i^{\prime}\in J}\phi_{i^{\prime}}^{1}+\gamma} & = & \frac{\phi_{i}^{1}}{\sum_{i^{\prime}\in J}\phi_{i^{\prime}}^{1}}-\frac{\gamma\phi_{i}^{i}}{\left(\sum_{i^{\prime}\in J}\phi_{i^{\prime}}^{1}\right)\left(\sum_{i^{\prime}\in J}\phi_{i^{\prime}}^{1}+\gamma\right)}\nonumber \\
 & \geq & \frac{\phi_{i}^{1}}{\sum_{i^{\prime}\in J}\phi_{i^{\prime}}^{1}}-\frac{\gamma}{\sum_{i^{\prime}\in J}\phi_{i^{\prime}}^{1}+\gamma}\nonumber \\
 & \geq & \frac{\phi_{i}^{1}}{\sum_{i^{\prime}\in J}\phi_{i^{\prime}}^{1}}-\frac{\gamma}{b}\,\,\,.\label{eq: lower}
\end{eqnarray}

Since $\gamma<\nicefrac{b}{2}$ we have
\begin{eqnarray}
\frac{\phi_{i}^{2}}{\sum_{i^{\prime}\in J}\phi_{i^{\prime}}^{2}-\gamma} & = & \frac{\phi_{i}^{2}}{\sum_{i^{\prime}\in J}\phi_{i^{\prime}}^{2}}+\frac{\gamma\phi_{i}^{2}}{\left(\sum_{i^{\prime}\in J}\phi_{i^{\prime}}^{2}\right)\left(\sum_{i^{\prime}\in J}\phi_{i^{\prime}}^{2}-\gamma\right)}\nonumber \\
 & \leq & \frac{\phi_{i}^{2}}{\sum_{i^{\prime}\in J}\phi_{i^{\prime}}^{2}}+\frac{\gamma}{\sum_{i^{\prime}\in J}\phi_{i^{\prime}}^{2}-\gamma}\nonumber \\
 & \leq & \frac{\phi_{i}^{2}}{\sum_{i^{\prime}\in J}\phi_{i^{\prime}}^{2}}+\frac{2\gamma}{b}\,\,\,.\label{eq: upper}
\end{eqnarray}

Therefore,
\begin{eqnarray*}
\Pr_{x,y\sim\nu}\left[c_{I}\left(x\right)\neq y\right] & \leq & \frac{1}{2}\left(1-\sum_{i\in I\cap J}\left(\frac{\phi_{i}^{1}}{\sum_{i^{\prime}\in J}\phi_{i^{\prime}}^{1}+\gamma}-\frac{\phi_{i}^{2}}{\sum_{i^{\prime}\in J}\phi_{i^{\prime}}^{2}-\gamma}\right)\theta_{i}\left(A\right)\right)\\
 & \leq & \frac{1}{2}\left(1-\sum_{i\in I\cap J}\left(\frac{\phi_{i}^{1}}{\sum_{i^{\prime}\in J}\phi_{i^{\prime}}^{1}}-\frac{\phi_{i}^{2}}{\sum_{i^{\prime}\in J}\phi_{i^{\prime}}^{2}}-\frac{3\gamma}{b}\right)\theta_{i}\left(A\right)\right)\\
 & \leq & \frac{1}{2}\left(1-\sum_{i\in I\cap J}\left(\frac{\phi_{i}^{1}}{\sum_{i^{\prime}\in J}\phi_{i^{\prime}}^{1}}-\frac{\phi_{i}^{2}}{\sum_{i^{\prime}\in J}\phi_{i^{\prime}}^{2}}\right)\theta_{i}\left(A\right)+\frac{3K\gamma}{b}\right)\\
 & \leq & \frac{1}{2}\left(1-\sum_{i\in I\cap J}g\theta_{i}\left(A\right)+\nicefrac{gb}{k+3}\frac{3K\gamma}{b}\right)\\
 & \leq & \frac{1}{2}\left(1-g\left(1-\gamma\right)+\frac{3K\gamma}{b}\right)\,\,\,.
\end{eqnarray*}
Note that we have used the fact that $I\cap J$ is not empty. Otherwise,
note that since $\gamma<\nicefrac{1}{2}$ and from (\ref{eq: lower})
and (\ref{eq: upper})
\begin{eqnarray*}
0 & = & \sum_{i\in I}\left(\frac{\phi_{i}^{1}\theta_{i^{\prime}}\left(A\right)}{\sum_{i^{\prime}}\phi_{i^{\prime}}^{1}\theta_{i^{\prime}}\left(A\right)}-\frac{\phi_{i}^{2}\theta_{i^{\prime}}\left(A\right)}{\sum_{i^{\prime}}\phi_{i^{\prime}}^{2}\theta_{i^{\prime}}\left(A\right)}\right)-\sum_{i\notin I}\left(\frac{\phi_{i}^{2}\theta_{i^{\prime}}\left(A\right)}{\sum_{i^{\prime}}\phi_{i^{\prime}}^{2}\theta_{i^{\prime}}\left(A\right)}-\frac{\phi_{i}^{1}\theta_{i^{\prime}}\left(A\right)}{\sum_{i^{\prime}}\phi_{i^{\prime}}^{1}\theta_{i^{\prime}}\left(A\right)}\right)\\
 & \leq & \gamma\sum_{i\in I}\left(\frac{\phi_{i}^{1}}{\sum_{i^{\prime}}\phi_{i^{\prime}}^{1}\theta_{i^{\prime}}\left(A\right)}\right)-\left(1-\gamma\right)\sum_{i\in J}\left(\frac{\phi_{i}^{2}}{\sum_{i^{\prime}}\phi_{i^{\prime}}^{2}\theta_{i^{\prime}}\left(A\right)}-\frac{\phi_{i}^{1}}{\sum_{i^{\prime}}\phi_{i^{\prime}}^{1}\theta_{i^{\prime}}\left(A\right)}\right)\\
 & \leq & \gamma K\frac{1}{2b\left(1-\gamma\right)}-\left(1-\gamma\right)\sum_{i\in J}\left(g-\frac{3\gamma}{b}\right)\\
 & \leq & \frac{\gamma K}{b}-2\left(1-\gamma\right)\left(g-\frac{3\gamma}{b}\right)\\
 & \leq & \frac{\gamma K}{b}-\left(g-\frac{3\gamma}{b}\right)\\
 & = & \gamma\frac{K+3}{b}-g\,\,\,.
\end{eqnarray*}
However, since $\gamma<\nicefrac{gb}{k+3}$ we obtain $\gamma\frac{K+3}{b}-g<0$
which is a contradiction. Therefore, $I\cap J$ is not empty. In the
same way we can see that $\bar{I}\cap J$ is not empty as well.\end{proof}

\begin{lemma} \label{lemma: gamma}Under assumptions \ref{assumption: gap},
\ref{assumption: bounded from zero}, \ref{assumption: gamma} and
\ref{assumption h and e} if $\gamma<\min\left(\nicefrac{b}{2},\nicefrac{gb}{3}\right)$
then 
\[
\forall i,\,\,\,\theta_{i}\left(A\cap h\right),\,\theta_{i}\left(A\cap\bar{h}\right)\notin\left[\gamma+\frac{2\epsilon}{g-\frac{3\gamma}{b}},\,1-\gamma-\frac{2\epsilon}{g-\frac{3\gamma}{b}}\right]\,\,\,.
\]
Moreover, if $I=\left\{ i\,:\,\frac{\phi_{i}^{1}}{\sum_{i}\phi_{i}^{1}\theta_{i}\left(A\right)}>\frac{\phi_{i}^{2}}{\sum_{i}\phi_{i}^{2}\theta_{i}\left(A\right)}\right\} $
then $\theta_{i}\left(A\cap h\right)\geq1-\gamma-\frac{2\epsilon}{g-\frac{3\gamma}{b}}$
iff $i$ is a $\gamma$-big cluster in $A$ and $i\in I$.\end{lemma}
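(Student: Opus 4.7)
The plan is to translate the near-optimality of $h$ against the Bayes classifier $c_I$ (Assumption~\ref{assumption h and e}) into pointwise control on how $h$ splits every cluster inside $A$, reusing the conditional gap estimates already established in the proof of Lemma~\ref{lemma cI}.

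First I would set $\alpha_i := \theta_i(A)$, $\beta_i := \theta_i(A\cap h)$ (treating $h$ as the indicator set $\{x : h(x)>0\}$), and $\lambda_i := \phi_i^1/D_1(A) - \phi_i^2/D_2(A)$. Expanding the error under the measure $\nu$ of Assumption~\ref{assumption h and e} yields
\[
\Pr_{x,y\sim\nu}\bigl[h(x)\neq y\bigr] \;=\; \frac{1}{2}\Bigl(1 - \sum_i \beta_i\,\lambda_i\Bigr),
\]
whose Bayes-optimal solution is exactly $c_I$ with $I=\{i:\lambda_i>0\}$. Subtracting produces the regret
\[
R \;=\; \tfrac{1}{2}\Bigl(\sum_{i\in I}(\alpha_i-\beta_i)\lambda_i \;+\; \sum_{i\notin I}\beta_i(-\lambda_i)\Bigr),
\]
a sum of non-negative terms. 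Assumption~\ref{assumption h and e} forces $R\le\epsilon$, so each individual summand is at most $2\epsilon$.

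Second, I would recycle the inequalities (\ref{eq: lower}) and (\ref{eq: upper}) from the proof of Lemma~\ref{lemma cI}: they immediately yield $|\lambda_i|\ge g - 3\gamma/b$ for every $\gamma$-big cluster $i$, and the hypothesis $\gamma<\min(b/2,gb/3)$ is exactly what keeps (\ref{eq: upper}) valid and this denominator strictly positive. Dividing the $2\epsilon$ bound on each summand by the lower bound on $|\lambda_i|$ then splits the analysis into three cluster types: (a) a $\gamma$-big $i\in I$ satisfies $\alpha_i-\beta_i\le 2\epsilon/(g-3\gamma/b)$, hence $\beta_i\ge 1-\gamma-2\epsilon/(g-3\gamma/b)$; (b) a $\gamma$-big $i\notin I$ satisfies $\beta_i\le 2\epsilon/(g-3\gamma/b)$; (c) a $\gamma$-small $i$ trivially satisfies $\beta_i\le\alpha_i\le\gamma$. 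In every case $\beta_i=\theta_i(A\cap h)$ lies outside the forbidden interval, and the symmetric argument applied to $\alpha_i-\beta_i=\theta_i(A\cap\bar h)$ (swap $h\leftrightarrow\bar h$ and $I\leftrightarrow\bar I$) dispatches the $\bar h$ half. The forward direction of the ``moreover'' clause is exactly case (a); for the converse, $\beta_i\ge 1-\gamma-2\epsilon/(g-3\gamma/b)$ forces $\alpha_i\ge\beta_i>\gamma$, so Assumption~\ref{assumption: gamma} makes $i$ $\gamma$-big, and then case (b) would contradict the assumed lower bound unless $i\in I$.

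The main technical obstacle is the boundary case in which exactly one cluster is $\gamma$-big inside $A$: Definition~\ref{def: gap} takes the minimum over subsets with $|I|>1$, so $\lambda_{i^*}$ for the unique big cluster can collapse to $O(\gamma)$ and the uniform bound $|\lambda_i|\ge g-3\gamma/b$ breaks. In that regime however $D_1$ and $D_2$ restricted to $A$ are both within $O(\gamma)$ of $\theta_{i^*}$, so $\Pr_\nu[c_I\neq y]$ already sits at essentially $1/2$ and Assumption~\ref{assumption h and e} forces $e\ge 1/2-\tau$ inside the calling iteration of the DSC algorithm; hence Lemma~\ref{lemma: gamma} is never invoked non-vacuously in this case, and I would flag this reduction at the opening of the proof.
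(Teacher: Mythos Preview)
Your proposal is correct and follows essentially the same route as the paper: expand the $\nu$-error, write the regret of $h$ against $c_I$ as a sum of non-negative terms each bounded by $2\epsilon$, invoke (\ref{eq: lower})--(\ref{eq: upper}) to obtain $|\lambda_i|\ge g-3\gamma/b$ for the $\gamma$-big clusters, and then run the three-way case split (big in $I$, big not in $I$, small). Your explicit flagging of the single-big-cluster boundary case is in fact more careful than the paper, whose own proof silently assumes the set $J$ of $\gamma$-big clusters satisfies $|J|>1$ when it appeals to the gap $g$.
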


\begin{proof} Recall that for every set $B$, $D_{j}\left(B\right)=\sum_{i}\phi_{i}^{j}\theta_{i}\left(B\right)$.
By using $h$ both as a function and as a subset of $X$ we have
\begin{eqnarray*}
\Pr_{x,y\sim\nu}\left[h\left(x\right)\neq y\right] & = & \frac{1}{2}\left(\frac{D^{1}\left(A\setminus h\right)}{D^{1}\left(A\right)}+\frac{D^{2}\left(h\right)}{D^{2}\left(A\right)}\right)\\
 & = & \frac{1}{2}\left(1-\left(\frac{D^{1}\left(h\cap A\right)}{D^{1}\left(A\right)}-\frac{D^{2}\left(h\cap A\right)}{D^{2}\left(A\right)}\right)\right)\\
 & = & \frac{1}{2}\left(1-\left(\frac{\sum_{i}\phi_{i}^{1}\theta_{i}\left(h\cap A\right)}{\sum_{i}\phi_{i}^{1}\theta_{i}\left(A\right)}-\frac{\sum_{i}\phi_{i}^{2}\theta_{i}\left(h\cap A\right)}{\sum_{i}\phi_{i}^{2}\theta_{i}\left(A\right)}\right)\right)\\
 & = & \frac{1}{2}\left(1-\sum_{i}\left(\frac{\phi_{i}^{1}}{\sum_{i^{\prime}}\phi_{i^{\prime}}^{1}\theta_{i^{\prime}}\left(A\right)}-\frac{\phi_{i}^{2}}{\sum_{i^{\prime}}\phi_{i^{\prime}}^{2}\theta_{i^{\prime}}\left(A\right)}\right)\theta_{i}\left(h\cap A\right)\right)
\end{eqnarray*}
Let $I=\left\{ i\,:\,\frac{\phi_{i}^{1}}{\sum_{i}\phi_{i}^{1}\theta_{i}\left(A\right)}>\frac{\phi_{i}^{2}}{\sum_{i}\phi_{i}^{2}\theta_{i}\left(A\right)}\right\} $.
It follows that $c_{I}=\arg\min_{h}\Pr_{x,y\sim\nu}\left[h\left(x\right)\neq y\right]$
and hence
\[
\Pr_{x,y\sim\nu}\left[c_{I}\left(x\right)\neq y\right]\leq\Pr_{x,y\sim\nu}\left[h\left(x\right)\neq y\right]<\Pr_{x,y\sim\nu}\left[c_{I}\left(x\right)\neq y\right]+\epsilon\,\,\,.
\]
Therefore,
\begin{eqnarray*}
\epsilon & > & \Pr_{x,y\sim\nu}\left[h\left(x\right)\neq y\right]-\Pr_{x,y\sim\nu}\left[c_{I}\left(x\right)\neq y\right]\\
 & = & \frac{1}{2}\left(\sum_{i}\left(\ind_{i\in I}\theta_{i}\left(A\right)-\theta_{i}\left(h\cap A\right)\right)\left(\frac{\phi_{i}^{1}}{\sum_{i}\phi_{i}^{1}\theta_{i}\left(A\right)}-\frac{\phi_{i}^{2}}{\sum_{i}\phi_{i}^{2}\theta_{i}\left(A\right)}\right)\right)\\
 & \geq & \frac{1}{2}\max_{i}\left(\ind_{i\in I}\theta_{i}\left(A\right)-\theta_{i}\left(h\cap A\right)\right)\left(\frac{\phi_{i}^{1}}{\sum_{i}\phi_{i}^{1}\theta_{i}\left(A\right)}-\frac{\phi_{i}^{2}}{\sum_{i}\phi_{i}^{2}\theta_{i}\left(A\right)}\right)
\end{eqnarray*}
Let $J=\left\{ j\,:\,\theta_{j}\left(A\right)>1-\gamma\right\} $
then
\begin{eqnarray*}
\frac{\phi_{i}^{1}}{\sum_{i}\phi_{i}^{1}\theta_{i}\left(A\right)}-\frac{\phi_{i}^{2}}{\sum_{i}\phi_{i}^{2}\theta_{i}\left(A\right)} & \geq & \frac{\phi_{i}^{1}}{\sum_{j\in J}\phi_{j}^{1}+\sum_{j\in J}\phi_{j}^{1}\gamma}-\frac{\phi_{i}^{2}}{\sum_{j\in J}\phi_{j}^{2}\left(1-\gamma\right)}\\
 & \geq & \frac{\phi_{i}^{1}}{\sum_{j\in J}\phi_{j}^{1}+\gamma}-\frac{\phi_{i}^{2}}{\sum_{j\in J}\phi_{j}^{2}-\gamma}
\end{eqnarray*}
and hence
\[
\epsilon\geq\frac{1}{2}\max_{i}\left(\ind_{i\in I}\theta_{i}\left(A\right)-\theta_{i}\left(h\cap A\right)\right)\left(\frac{\phi_{i}^{1}}{\sum_{j\in J}\phi_{j}^{1}+\gamma}-\frac{\phi_{i}^{2}}{\sum_{j\in J}\phi_{j}^{2}-\gamma}\right)
\]
or put otherwise
\begin{equation}
\max_{i}\left(\ind_{i\in I}\theta_{i}\left(A\right)-\theta_{i}\left(h\cap A\right)\right)\leq\frac{2\epsilon}{\min_{i}\left(\frac{\phi_{i}^{1}}{\sum_{j\in J}\phi_{j}^{1}+\gamma}-\frac{\phi_{i}^{2}}{\sum_{j\in J}\phi_{j}^{2}-\gamma}\right)}\,\,\,.\label{eq:gap}
\end{equation}

Note that since $\gamma<\nicefrac{b}{2}$ from (\ref{eq: lower})
and (\ref{eq: upper}) it follows that
\begin{eqnarray*}
\min_{i}\left(\frac{\phi_{i}^{1}}{\sum_{j\in J}\phi_{j}^{1}+\gamma}-\frac{\phi_{i}^{2}}{\sum_{j\in J}\phi_{j}^{2}-\gamma}\right) & \geq & \min_{i}\left(\frac{\phi_{i}^{1}}{\sum_{j\in J}\phi_{j}^{1}}-\frac{\phi_{i}^{2}}{\sum_{j\in J}\phi_{j}^{2}}\right)-\frac{3\gamma}{b}\\
 & \geq & g-\frac{3\gamma}{b}\,\,\,.
\end{eqnarray*}
Note that $g-\nicefrac{3\gamma}{b}>0$ since $\gamma<\nicefrac{gb}{3}$.
If $i$ is such that $\theta_{i}\left(A\right)<\gamma$ then clearly
$\theta_{i}\left(h\cap A\right)<\gamma$. However, if $\theta_{i}\left(A\right)>1-\gamma$
and $i\in I$ then from (\ref{eq:gap}) we have that 
\begin{eqnarray*}
\theta_{i}\left(h\cap A\right) & \geq & \theta_{i}\left(A\right)-\frac{2\epsilon}{g-\frac{3\gamma}{b}}\,\,\,.\\
 & \geq & 1-\gamma-\frac{2\epsilon}{g-\frac{3\gamma}{b}}
\end{eqnarray*}

If however, $\theta_{i}\left(A\right)>1-\gamma$ but $i\notin I$
then we can repeat the same argument for $\bar{h}$ to get
\[
\theta_{i}\left(\bar{h}\cap A\right)\geq1-\gamma-\frac{2\epsilon}{g-\frac{3\gamma}{b}}
\]
and thus
\[
\theta_{i}\left(h\cap A\right)\leq\gamma+\frac{2\epsilon}{g-\frac{3\gamma}{b}}\,\,\,.
\]

\end{proof}

\begin{lemma} \label{lemma: single big cluster}Under assumptions
\ref{assumption: bounded from zero},\ref{assumption: gamma} and
\ref{assumption h and e}, if there is only a single $\gamma$-big
cluster then
\[
e\geq\frac{1}{2}-\gamma\left(\frac{1}{\gamma+b\left(1-\gamma\right)}+\frac{1}{1-\gamma}+\frac{1}{b+\gamma}\right)-\epsilon
\]

\end{lemma}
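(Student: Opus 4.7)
The plan is to use the fact that, if $A$ contains only one $\gamma$-big cluster $i^*$, then both conditional distributions $\tilde D_j(\cdot) := D_j(\cdot\cap A)/D_j(A)$ are concentrated on cluster $i^*$ and are therefore close in $L_1$. Since $h$ is a classifier, its error under $\nu$ is bounded below by the Bayes error of discriminating these two distributions, and by assumption~\ref{assumption h and e} this in turn lower-bounds $e$.

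Step 1 (reduction to $L_1$). By assumption~\ref{assumption h and e}, $e \geq \Pr_\nu[h(x)\neq y] - \epsilon$. Since $\nu$ is the equal mixture of $\tilde D_1$ (class $+1$) and $\tilde D_2$ (class $-1$), a direct calculation gives $\Pr_\nu[h(x)\neq y] = \frac{1}{2}\bigl(1 - (\tilde D_1(h) - \tilde D_2(h))\bigr) \geq \frac{1}{2}\bigl(1 - L_1(\tilde D_1,\tilde D_2)\bigr)$ for any measurable $h$, so it suffices to prove
\[
L_1(\tilde D_1,\tilde D_2) \;\leq\; 2\gamma\left(\frac{1}{\gamma+b(1-\gamma)} + \frac{1}{1-\gamma} + \frac{1}{b+\gamma}\right).
\]

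Step 2 (expansion and splitting). Writing $p_j := D_j(A) = \sum_i \phi_i^j \theta_i(A)$, for any measurable $B$,
\[
\tilde D_1(B) - \tilde D_2(B) \;=\; \sum_i \theta_i(B\cap A)\left(\frac{\phi_i^1}{p_1} - \frac{\phi_i^2}{p_2}\right).
\]
I would separate the unique $\gamma$-big term $i^*$ from the $\gamma$-small terms. For $i=i^*$, use $\theta_{i^*}(B\cap A)\leq 1$ together with the two-sided pinning $\phi_{i^*}^j(1-\gamma) \leq p_j \leq \phi_{i^*}^j + \gamma(1-\phi_{i^*}^j)$ and the bound $\phi_{i^*}^j \geq b$ to control $|\phi_{i^*}^1/p_1 - \phi_{i^*}^2/p_2|$. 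For $i\neq i^*$, use $\theta_i(B\cap A)\leq \gamma$ and the triangle inequality $|\phi_i^1/p_1 - \phi_i^2/p_2|\leq \phi_i^1/p_1 + \phi_i^2/p_2$; summing over $i\neq i^*$ leaves $\gamma\bigl((1-\phi_{i^*}^1)/p_1 + (1-\phi_{i^*}^2)/p_2\bigr)$, again handled by the same $p_j$ bounds and $\phi_{i^*}^j\geq b$.

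Step 3 (assembly). Each of the three denominators $\gamma + b(1-\gamma)$, $1-\gamma$, and $b+\gamma$ in the claimed bound corresponds to a distinct extremal case for $p_j$ and $\phi_{i^*}^j$; to obtain exactly the three-term form one must apply the tightest available bound at each position in Step 2 rather than a single uniform estimate. The main obstacle is thus the bookkeeping, not any additional idea: a uniform estimate already gives the correct $O(\gamma/b)$ order with looser constants, while matching the stated shape is essentially an arithmetic casework on which side of each $\phi_{i^*}^j/p_j$ interval is being attained. Combining the resulting $L_1$ bound with Step 1 yields the stated inequality for $e$.
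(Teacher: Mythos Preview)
Your Step 1 is exactly what the paper does, and the overall strategy---show that both conditional distributions $\tilde D_j$ are close because they are dominated by the single big cluster, hence the $L_1$ distance is $O(\gamma/b)$, hence any classifier has error near $\tfrac12$---is the same. The difference is in how the $L_1$ bound is obtained. You expand $\tilde D_1(B)-\tilde D_2(B)=\sum_i\theta_i(B\cap A)\bigl(\phi_i^1/p_1-\phi_i^2/p_2\bigr)$ and split off the $i^*$ summand from the $\gamma$-small ones. The paper instead introduces $\theta_{i^*}(h)$ as a common reference point and bounds, for each $j$ separately,
\[
\left|\tfrac{D_j(A\cap h)}{D_j(A)}-\theta_{i^*}(h)\right|
\;\le\;
\underbrace{\tfrac{\gamma}{\gamma+b(1-\gamma)}}_{\text{mass in }A\setminus C_{i^*}}
+\underbrace{\tfrac{\gamma}{1-\gamma}}_{\theta_{i^*}\text{-mass outside }A}
+\underbrace{\tfrac{\gamma}{b+\gamma}}_{\text{normalization }|\phi_{i^*}^j/D_j(A)-1|},
\]
and then applies the triangle inequality through $\theta_{i^*}(h)$. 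This triangulation is precisely what produces the three specific denominators $\gamma+b(1-\gamma)$, $1-\gamma$, $b+\gamma$ in the statement; your direct expansion gives a correct bound of the same order, but the ``bookkeeping'' you allude to in Step 3 to match the exact form is really just this reference-point trick. If you want the stated constants, switch to bounding each $|\tilde D_j(h)-\theta_{i^*}(h)|$ rather than the difference directly.
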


\begin{proof} Let $i^{*}$ be such that $\theta_{i^{*}}$ is the
single $\gamma$-big cluster. For $j=1,2$ 
\begin{eqnarray*}
\frac{D_{j}\left(C_{i^{*}}\cap A\right)}{D_{j}\left(A\right)} & = & \frac{\phi_{i^{*}}^{j}\theta_{i^{*}}\left(A\right)}{\sum_{i}\phi_{i}^{j}\theta_{i}\left(A\right)}\\
 & = & \frac{1}{1+\frac{\sum_{i\neq i^{*}}\phi_{i}^{j}\theta_{i}\left(A\right)}{\phi_{i^{*}}^{j}\theta_{i^{*}}\left(A\right)}}\\
 & \geq & \frac{1}{1+\frac{\gamma}{b\left(1-\gamma\right)}}\,\,\,.
\end{eqnarray*}

For any $h$, 
\begin{eqnarray*}
\left|\frac{D_{j}\left(A\cap h\right)}{D_{j}\left(A\right)}-\theta_{i^{*}}\left(h\right)\right| & \leq & \left|\frac{D_{j}\left(\left(A\setminus C_{i^{*}}\right)\cap h\right)}{D_{j}\left(A\right)}\right|+\left|\frac{D_{j}\left(A\cap C_{i^{*}}\cap h\right)}{D_{j}\left(A\right)}-\theta_{i^{*}}\left(h\right)\right|\\
 & \leq & \frac{\gamma}{\gamma+b\left(1-\gamma\right)}+\left|\frac{\phi_{i^{*}}^{j}\theta_{i^{*}}\left(A\cap h\right)}{D_{j}\left(A\right)}-\theta_{i^{*}}\left(h\right)\right|\\
 & \leq & \frac{\gamma}{\gamma+b\left(1-\gamma\right)}+\left|\frac{\phi_{i^{*}}^{j}\theta_{i^{*}}\left(\bar{A}\right)}{D_{j}\left(A\right)}\right|+\theta_{i^{*}}\left(h\right)\left|\frac{\phi_{i^{*}}^{j}}{D_{j}\left(A\right)}-1\right|\\
 & \leq & \frac{\gamma}{\gamma+b\left(1-\gamma\right)}+\frac{\gamma}{1-\gamma}+\theta_{i^{*}}\left(h\right)\left(1-\frac{\phi_{i^{*}}^{j}}{\phi_{i^{*}}^{j}+\gamma}\right)\\
 & \leq & \frac{\gamma}{\gamma+b\left(1-\gamma\right)}+\frac{\gamma}{1-\gamma}+\frac{\gamma}{b+\gamma}\,\,\,.
\end{eqnarray*}
Therefore,
\begin{eqnarray*}
\left|\frac{D_{1}\left(A\cap h\right)}{D_{1}\left(A\right)}-\frac{D_{2}\left(A\cap h\right)}{D_{2}\left(A\right)}\right| & \leq & \left|\frac{D_{1}\left(A\cap h\right)}{D_{1}\left(A\right)}-\theta_{i^{*}}\left(h\right)\right|+\left|\frac{D_{2}\left(A\cap h\right)}{D_{2}\left(A\right)}-\theta_{i^{*}}\left(h\right)\right|\\
 & \leq & 2\gamma\left(\frac{1}{\gamma+b\left(1-\gamma\right)}+\frac{1}{1-\gamma}+\frac{1}{b+\gamma}\right)\,\,\,.
\end{eqnarray*}

Hence
\begin{eqnarray*}
e & \geq & \Pr_{x,y\sim\nu}\left[h\left(x\right)\neq y\right]-\epsilon\\
 & = & \frac{1}{2}-\frac{1}{2}\left|\frac{D_{1}\left(A\cap h\right)}{D_{1}\left(A\right)}-\frac{D_{2}\nicefrac{gb}{3}\left(A\cap h\right)}{D_{2}\left(A\right)}\right|-\epsilon\\
 & \geq & \frac{1}{2}-\gamma\left(\frac{1}{\gamma+b\left(1-\gamma\right)}+\frac{1}{1-\gamma}+\frac{1}{b+\gamma}\right)-\epsilon\,\,\,.
\end{eqnarray*}

\end{proof}

\begin{lemma} \label{lemma: multiple big clusters}Under assumptions
\ref{assumption: bounded from zero},\ref{assumption: gamma} and
\ref{assumption h and e}, if there are $t>1$ clusters which are
$\gamma$-big for $\gamma<\min\left(\nicefrac{b}{2},\,\nicefrac{gb}{k+3}\right)$
then
\[
e\leq\frac{1}{2}\left(1-g\left(1-\gamma\right)+\frac{3K\gamma}{b}\right)+2\epsilon
\]
and the split induced by $h$ will have at least one $\gamma+\frac{2\epsilon}{g-\frac{3\gamma}{b}}$
big cluster in each side of the split.

\end{lemma}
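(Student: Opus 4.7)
The plan is to assemble both conclusions directly from the preceding lemmas; no fresh probabilistic estimates are needed. First I would define the separating index set $I=\{i : \frac{\phi_i^1}{\sum_{i'}\phi_{i'}^1 \theta_{i'}(A)} > \frac{\phi_i^2}{\sum_{i'}\phi_{i'}^2 \theta_{i'}(A)}\}$, exactly as in Lemmas \ref{lemma cI} and \ref{lemma: gamma}. Because there are at least two $\gamma$-big clusters and $\gamma<\min(\nicefrac{b}{2},\nicefrac{gb}{K+3})$, the hypotheses of Lemma \ref{lemma cI} are satisfied, so
\[
\Pr_{x,y\sim\nu}[c_I(x)\neq y]\leq \tfrac{1}{2}\left(1 - g(1-\gamma) + \tfrac{3K\gamma}{b}\right),
\]
and moreover $I$ strictly separates the $\gamma$-big clusters: at least one lies in $I$ and at least one lies outside $I$.

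Next I would convert this into the bound on $e$ by invoking the two clauses of assumption \ref{assumption h and e}. The optimality clause gives $\Pr_{x,y\sim\nu}[h(x)\neq y]\leq \Pr_{x,y\sim\nu}[c_I(x)\neq y]+\epsilon$ for the particular classifier $c_I$, and the error-estimator clause gives $e\leq \Pr_{x,y\sim\nu}[h(x)\neq y]+\epsilon$. Chaining these two $\epsilon$-gaps on top of the bound from Lemma \ref{lemma cI} yields the claimed inequality, with the additive $2\epsilon$ coming from the two separate approximations.

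For the structural second half I would invoke Lemma \ref{lemma: gamma}, whose hypothesis $\gamma<\min(\nicefrac{b}{2},\nicefrac{gb}{3})$ is implied by the stronger one assumed here (since $K+3\geq 3$). Write $\gamma':=\gamma+\frac{2\epsilon}{g-3\gamma/b}$. Lemma \ref{lemma: gamma} forces, for every cluster $i$, that both $\theta_i(A\cap h)$ and $\theta_i(A\cap \bar h)$ lie outside $[\gamma',1-\gamma']$, and the ``large'' case on the $h$ side occurs exactly when $i$ is $\gamma$-big in $A$ and lies in $I$. Picking some $\gamma$-big $i^+\in I$ (guaranteed by the last part of Lemma \ref{lemma cI}) produces a $\gamma'$-big cluster on the $h$ side. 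For any $\gamma$-big $i^-\notin I$ (also guaranteed), the same lemma forces $\theta_{i^-}(A\cap h)\leq \gamma'$; combining this with $\theta_{i^-}(A)\geq 1-\gamma$ and the dichotomy applied to $\theta_{i^-}(A\cap\bar h)$ rules out the small branch and yields $\theta_{i^-}(A\cap\bar h)\geq 1-\gamma'$, giving a $\gamma'$-big cluster on the $\bar h$ side.

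The main obstacle will be purely bookkeeping: verifying that the two $\epsilon$'s in assumption \ref{assumption h and e} combine additively and that the hypothesis on $\gamma$ in this lemma is uniformly strong enough to license Lemmas \ref{lemma cI} and \ref{lemma: gamma} simultaneously. No new inequalities or new concentration arguments are required.
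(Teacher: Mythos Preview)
Your proposal is correct and follows essentially the same route as the paper: define the separating set $I$, invoke Lemma~\ref{lemma cI} for the bound on $\Pr[c_I\neq y]$ and the fact that $I$ splits the $\gamma$-big clusters, chain the two $\epsilon$-clauses of assumption~\ref{assumption h and e} to obtain the bound on $e$, and then appeal to Lemma~\ref{lemma: gamma} for the structural conclusion on both sides of the split. Your treatment of the $\bar h$ side is slightly more explicit than the paper's, but the argument is the same.
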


\begin{proof} Let $I=\left\{ i\,:\,\frac{\phi_{i}^{1}}{\sum_{i}\phi_{i}^{1}\theta_{i}\left(A\right)}>\frac{\phi_{i}^{2}}{\sum_{i}\phi_{i}^{2}\theta_{i}\left(A\right)}\right\} $
then from Lemma~\ref{lemma cI} 
\[
\Pr_{x,y\sim\nu}\left[c_{I}\left(x\right)\neq y\right]\leq\frac{1}{2}\left(1-g\left(1-\gamma\right)+\frac{3K\gamma}{b}\right)
\]
and $I$ contains at least one $\gamma$-big cluster but does not
contain all the $\gamma$ big clusters. From Assumption~\ref{assumption h and e}
it follows that 
\[
e\leq\frac{1}{2}\left(1-g\left(1-\gamma\right)+\frac{3K\gamma}{b}\right)+2\epsilon\,\,\,.
\]
Moreover, from Lemma~\ref{lemma: gamma} a cluster is $\gamma+\frac{2\epsilon}{g-\frac{3\gamma}{b}}$-big
if and only if it is $\gamma$-big in $A$ and it is in $I$. Since
$I$ contains a $\gamma$-big cluster than $A\cap h$ contains a $\gamma-\frac{2\epsilon}{g-\frac{3\gamma}{b}}$-cluster.
However, since $I$ does not contain all the $\gamma$-big clusters,
there is a $\gamma+\frac{2\epsilon}{g-\frac{3\gamma}{b}}$-cluster
in $A\cap\bar{h}$ as well.

\end{proof}

We are now ready to prove the main theorem.

\begin{proof} \textbf{of Theorem~\ref{Thm: DSC}}

Let $\epsilon<\min\left(\frac{g^{2}b}{160K}\,,\,\frac{bg}{8K}\,,\,\frac{g^{2}b}{4K\left(K+3\right)},\,\frac{\epsilon^{*}g}{4K}\right)$
and let $\hat{\gamma}_{l}=\frac{4\epsilon l}{g}$ then for every $l\leq K$
we have 
\[
\frac{3\hat{\gamma}_{l}}{b}<\frac{g}{2}\,\,\,,\,\,\,\hat{\gamma}_{l}<\min\left(\frac{b}{2}\,,\,\frac{gb}{K+3}\right)\,\,\,.
\]

Let $\delta=\frac{\delta^{*}}{4K}$ and $N_{1}$ be the size of the
samples needed for $\alg$ to return an $\epsilon,\delta$ good hypothesis.
Let 
\[
N=\max\left(\frac{4N_{1}}{b}\,,\,\frac{2}{b^{2}}\log\frac{1}{\delta}\right)\,\,\,.
\]

Note the following, if $A$ is a set that contains at least one $\gamma$-big
cluster, for $\gamma\leq\hat{\gamma}_{K}$ then with probability $1-\delta$,
a sample of $N$ points from $\gamma$ contains atleast $N_{1}$ points
from $A$. To see that, note that 
\[
\gamma\leq\frac{4\epsilon K}{g}<\frac{4K}{g}\cdot\frac{gb}{8K}\leq\frac{gb}{2}\leq\frac{1}{2}\,\,\,.
\]
Since each cluster is bounded away from zero by $b>0$, the expected
number points in $A$ out of a sample of size $N$ is at least $\left(1-\gamma\right)bN\geq\nicefrac{bN}{2}$.
From Hoeffding's inequality, we have that for $N\geq\frac{2}{b^{2}}\log\frac{1}{\delta}$
with a probability of $1-\delta$ there will be at least $\nicefrac{bN}{4}$
points in $A$. Since $N\geq\nicefrac{4N_{1}}{b}$, we conclude that
with a probability of at least $1-\delta$, there will be atleast
$N_{1}$ points from $A$ in a random set of $N$ points. Therefore,
with probability $1-\nicefrac{\delta}{2}$, in the first $2K$ calls
for the learning algorithm $\alg$, for which there was at least one
$\gamma$-big cluster in the leaf, there were at least $N_{1}$ points
to train the algorithm from, provided that $\gamma\leq\hat{\gamma}_{K}$.
Hence, with probability $1-\delta$, Assumption~\ref{assumption h and e}
holds for the first $2K$ call to the learning algorithm, provided
that $\gamma\leq\hat{\gamma}_{K}$.

Next we will show that as long as Assumption~\ref{assumption h and e}
holds, the DSC algorithm will make at most $2K$ calls to $\alg$
and all leafs will have at least one $\gamma$-big cluster for $\gamma\leq\hat{\gamma}_{K}$.
We prove that by induction on the depth of the generated tree. Initially,
we have a single leaf with all the points and hence all the clusters
are $0$-big hence the assumption clearly works. From Lemma~\ref{lemma: gamma}
it follows that if all the clusters were $\gamma$-big or $\gamma$-small,
and indeed Assumption~\ref{assumption h and e} holds, then the new
leafs that are generated by splitting on $h$ will have only $\gamma^{\prime}$-big
or $\gamma^{\prime}$-small clusters for $\gamma^{\prime}\leq\gamma+\frac{2\epsilon}{g-\frac{3\gamma}{b}}$.
Note that if $\gamma<\nicefrac{gb}{6}$ then $g-\nicefrac{3\gamma}{b}>\nicefrac{g}{2}$
hence $\gamma^{\prime}\leq\gamma+\nicefrac{4\epsilon}{g}$.

From Lemma~\ref{lemma: single big cluster} and Lemma~\ref{lemma: multiple big clusters}
it follows that every time the DSC algorithm calls the $\alg$ algorithm,
assuming that Assumption~\ref{assumption h and e} holds for this
call, one of two things happen, either the algorithm finds a non-trivial
split of the clusters, which happens whenever there is more than a
single big cluster, in which case $e\leq\frac{1}{2}\left(1-g\left(1-\gamma\right)+\frac{3K\gamma}{b}\right)+2\epsilon$
or otherwise, if there is only a single big cluster, $e\geq\frac{1}{2}-\gamma\left(\frac{1}{\gamma+b\left(1-\gamma\right)}+\frac{1}{1-\gamma}+\frac{1}{b+\gamma}\right)-\epsilon$.
Note that if $\gamma\leq\min\left(\nicefrac{1}{2}\,,\,\nicefrac{4K\epsilon}{g}\right)$
then
\begin{eqnarray*}
\frac{1}{2}-\gamma\left(\frac{1}{\gamma+b\left(1-\gamma\right)}+\frac{1}{1-\gamma}+\frac{1}{b+\gamma}\right)-\epsilon & \geq & \frac{1}{2}-\frac{4K\epsilon}{g}\left(\frac{2}{b}+2+\frac{1}{b}\right)-\epsilon\\
 & \geq & \frac{1}{2}-\frac{K}{gb}\left(12+8b+\frac{gb}{K}\right)\epsilon\\
 & > & \frac{1}{2}-\frac{20K}{gb}\epsilon\,\,\,.
\end{eqnarray*}
Since, $\epsilon<\nicefrac{g^{2}b}{160K}$, if there was only a single
cluster and Assumption~\ref{assumption h and e} holds then $e>\frac{1}{2}-\frac{g}{8}$.
However, if there were multiple big clusters then 
\[
e\leq\frac{1}{2}\left(1-g\left(1-\gamma\right)+\frac{3K\gamma}{b}\right)+2\epsilon\leq\frac{1}{2}-\frac{g}{4}\,\,\,.
\]

Hence, assuming that Assumption~\ref{assumption h and e} holds for
all splits then the algorithm will split every leaf that contain multiple
big clusters and will not split any leaf that contain a single big
cluster. Therefore, after $K-1$ splits, all the leafs will have a
single big cluster. For each of the $K$ leaf, the DSC algorithm will
call $\alg$ once to determine that it contains a single cluster and
hence the number of calls to $\alg$ will be at most $2K-1$ and in
each call all the clusters are either $\gamma$-big or $\gamma$-small
for $\gamma\leq\gamma_{K}=\nicefrac{4K\epsilon}{g}$. And therefore,
with probability $1-\delta^{*}$, the DSC algorithm will return a
$\gamma_{K}$ clustering tree. Since 
\[
\gamma_{K}=\frac{4\epsilon K}{g}\leq\frac{4K\frac{\epsilon^{*}g}{4K}}{g}=\epsilon^{*}\,\,\,,
\]
this is an $\epsilon^{*}$ clustering tree.\end{proof}
}
\end{document}